\pgfplotsset{width=10cm,compat=1.9}
\newtheorem{theorem}{Theorem}
\newtheorem{corollary}{Corollary}
\newcommand{\edit}[1]{{#1}}
\newcommand{\del}[1]{}
\newcommand{\parbasic}[1]{\textbf{#1} \hspace{0.5mm}}
\newcommand{\category}[1]{\underline{\smash{#1}}}
\newcommand{\best}[1]{\textcolor{blue}{$\mathbf{#1}$}}
\title{What's Behind the Mask: Estimating \\ Uncertainty in Image-to-Image Problems}
\author{Gilad Kutiel, Regev Cohen, Michael Elad \& Daniel Freedman 
\\ Verily Research
\\ Haifa, Israel 
\\ \texttt{\{gkutiel,regevcohen,melad,danielfreedman\}@verily.com}
}
\begin{document}

\maketitle

\begin{abstract}
Estimating uncertainty in image-to-image networks is an important task, particularly as such networks are being increasingly deployed in the biological and medical imaging realms.  In this paper, we introduce a new approach to this problem based on masking.  Given an existing image-to-image network, our approach computes a mask such that the distance between the masked reconstructed image and the masked true image is guaranteed to be less than a specified threshold, with high probability.  The mask thus identifies the more certain regions of the reconstructed image.  Our approach is agnostic to the underlying image-to-image network, and only requires triples of the input (degraded), reconstructed and true images for training.  Furthermore, our method is agnostic to the distance metric used.  As a result, one can use $L_p$-style distances or perceptual distances like LPIPS, which contrasts with interval-based approaches to uncertainty.  Our theoretical guarantees derive from a conformal calibration procedure.  We evaluate our mask-based approach to uncertainty on image colorization, image completion, and super-resolution tasks, demonstrating high quality performance on each.
\end{abstract}


\section{Introduction}
Deep Learning has been very successful in many applications, spanning computer vision, speech recognition, natural language processing, and beyond.  For many years, researchers were mainly content to develop new techniques to achieve unprecedented accuracy, without concern for understanding the uncertainty implicit in such models.  More recently, however, there has been a concerted effort within the research community to understand and quantify the uncertainty of deep models.

This paper addresses the problem of estimating uncertainty in the realm of image-to-image (sometimes referred to as image reconstruction) tasks.  Such tasks include super-resolution, deblurring, colorization, and image completion, amongst others.  Computing the uncertainty is important generally, but is particularly so in application domains such as biological and medical imaging, in which fidelity to the ground truth is paramount.  If there is an area of the reconstructed image where such fidelity is unlikely or unreliable due to high uncertainty, this is crucial to convey.

Our approach to uncertainty estimation is based on masking.  Specifically, we are interested in the possibility of computing a mask such that the uncertain \del{pixels} \edit{regions} in the image are masked out.  More formally, we would like the distance between the masked reconstructed image and the masked true image to be small in expectation.  Ideally, the method should be agnostic to the choice of distance function, which should be dictated by the application. A high level overview of our approach is illustrated in Figure~\ref{fig:model}.

\input{floats/fig_model}

\del{The prior description of the approach implies the use of a binary mask.  However, we show that in replacing the binary mask with a continuous one, we can theoretically}
\edit{We } show a direct connection between the mask and a theoretically well-founded definition of uncertainty that matches image-to-image tasks.  We then derive an algorithm for computing such a mask which can apply to any existing (i.e. pre-trained) image-to-image network, and any distance function between image pairs.  All that is required for training the mask network is triplets of the input (degraded), reconstructed and true images.  Using a procedure based on conformal prediction \citep{DBLP:gentel}, we can guarantee that the masks so produced satisfy the following criterion: the distance between the masked reconstructed image and the masked true image is guaranteed to be less than a specified threshold, with high probability.  We demonstrate the power of the method on image colorization, image completion, and super-resolution tasks.

Our contributions are as follows:
\begin{itemize}
    \item We present an approach to uncertainty computation based on masking.  We show that moving from binary to continuous masks preserves the connection with a theoretically well-founded definition of uncertainty that is relevant for image-to-image tasks.
    \item We derive an algorithm for computing the masks which works for arbitrary image-to-image networks, and any distance function between image pairs.  The masks are guaranteed to yield a small distance between masked ground truth and reconstructed images, with high probability.
    \item We demonstrate the effectiveness of the method on image colorization, image completion, and super-resolution tasks attaining high quality performance on each.
\end{itemize}

\subsection*{Notations}
\label{sec:notations}
Throughout the paper we represent an image as a column-stacked vector $x\in\mathbb{R}^n$. We use $x_k$ to denote the $k$th image of a collection, while $x_{(i)}$ marks the $i$th entry of $x$. We define $\vv{0}\in\mathbb{R}^n$ and $\vv{1}\in\mathbb{R}^n$ to be vectors of all zeros and ones respectively. The operation $x\odot y$ stands for the Hadamard (point-wise) product between $x$ and $y$. For $p\geq 1$, we denote the $\ell_p$-norm of $x$ by $\norm{x}_p^p\triangleq \sum_{i=1}^n |x_{(i)}|^p$. The symbol $d(\cdot, \cdot)$ represents a general distortion measure between two images. We define a continuous \textit{mask} as a vector $m \in [0, 1]^n$, where the \emph{average size} of a mask equals $\norm{\vv{1}-m}_1/n$ such that the mask $\vv{1}$ (no-masking) has size 0 while the mask $\vv{0}$ has size of 1. Given a mask $m$, we define $d_m(x, y) \triangleq d(m \odot x, m \odot y)$ as the distortion measure between the masked versions of images $x$ and $y$. For a natural number $n\geq 1$ we define the set $[n]\triangleq\{1,...,n\}$.




\section{Related Work}
\label{sec:prior}

\parbasic{Bayesian Uncertainty Quantification}
The Bayesian paradigm defines uncertainty by assuming a distribution over the model parameters and/or activation functions.
The most prevalent approach is Bayesian neural networks \citep{mackay1992bayesian, valentin2020hands, izmailov2020subspace}, which are stochastic models trained using Bayesian inference. Yet, as the number of model parameters has grown rapidly, computing the
exact posteriors has became computationally intractable. This shortcoming has led to the development of approximation methods such as Monte Carlo  dropout \citep{gal2016dropout, gal2017concrete}, stochastic gradient Markov chain Monte Carlo \citep{salimans2015markov, chen2014stochastic}, Laplacian approximations \citep{ritter2018scalable} and variational inference \citep{blundell2015weight, louizos2017multiplicative, posch2019variational}. Alternative Bayesian techniques include deep Gaussian processes \citep{damianou2013deep}, deep ensembles \citep{ashukha2020pitfalls, hu2019mbpep}, and deep Bayesian active learning \citep{gal2017deep}, to name just a few. For a recent comprehensive review on Bayesian uncertainty quantification we refer the readers to \citet{abdar2021review}.

\parbasic{Distribution-Free Methods and Conformal Prediction}
Unlike Bayesian methods, the frequentist approach assumes the true model parameters are fixed with no underlying distribution. Examples of such distribution-free techniques are model ensembles \citep{lakshminarayanan2017simple, pearce2018high}, bootstrap \citep{kim2020predictive, alaa2020frequentist}, interval regression \citep{pearce2018high, kivaranovic2020adaptive, wu2021quantifying} and quantile
regression \citep{gasthaus2019probabilistic, romano2019conformalized}. An important distribution-free technique which is most relevant to our work is conformal prediction \citep{angelopoulos2021gentle, shafer2008tutorial}. The conformal approach relies on a labeled calibration dataset to convert point estimations into prediction regions. Conformal methods can be used with any estimator, require no retraining, are computationally efficient and provide coverage guarantees in finite samples \citep{lei2018distribution}. Recent development includes conformalized quantile regression \citep{romano2019conformalized, sesia2020comparison, angelopoulos2022image}, conformal risk control \citep{angelopoulos2022conformal, bates2021distribution, angelopoulos2021learn} and semantic uncertainty intervals for generative adversarial networks \citep{sankaranarayanan2022semantic}. 
\citet{sun2022conformal} provides an extensive survey on distribution-free conformal prediction methods.

\parbasic{Uncertainty for Image-to-image Regression Tasks} A work that stands out in this realm is  \citep{angelopoulos2022image}, which 
experiments with the conformal prediction technique. They evaluate and compare several models that address both the prediction of a value per each pixel, and an accompanying confidence interval that should contain the true value. After calibration, a guarantee is provided such that with high probability the true values of the pixels are indeed within these predicted confidence intervals. They find that quantile regression gives the best results among the four methods considered. As our work addresses the same breed of image-to-image problems, we provide extensive comparison to this work. 




\section{Mask-Based Uncertainty Estimation}
\subsection{Problem Formulation}
\label{sec:formulation}
We consider the problem of recovering an image $y\in\mathbb[0,1]^n$ from its observations $x\in\mathbb[0,1]^{n_x}$ and denote by $\hat y(x)\in\mathbb[0,1]^n$ a given estimator of $y$. We define the uncertainty of $\hat y$ given $x$ as follow:
\begin{equation}
\label{def:uncertainty}
   U(\hat y|x) \triangleq \mathbb{E}_{y|x}[d(y, \hat{y})],
\end{equation}
where $d(\cdot,\cdot)$ denotes an arbitrary distortion measure between the ground truth image $y$ and its estimation $\hat y$, and $\mathbb{E}$ is the expectation over the random variable $y|x$. 
Ideally, we would like to design $\hat y$ such that 
\begin{equation}
    \label{eq:idealgoal}
    P\Big(U(\hat y|x)  \leq \alpha\Big) \geq \beta
\end{equation}
where $0\leq\alpha$ and $0\leq\beta\leq1$ are user-defined, and the probability $P(\cdot)$ is computed over the joint probability of the problem pairs $(y,x)$. However, in practice, condition (\ref{eq:idealgoal}) might not hold. Hence, our goal is to identify areas within $\hat y$ which have the most adverse effect on the uncertainty and lead to the violation of (\ref{eq:idealgoal}). Formally, we aim to construct an uncertainty map given by a continuous mask $m \in [0, 1]^n$ such that
\begin{equation}
    \label{eq:goal}
    P\Big(\mathbb{E}_{y|x}[d_m(y, \hat{y})] \leq \alpha\Big) \geq \beta
\end{equation}
where $d_m(y, \hat{y})\triangleq d(m\odot y, m\odot\hat{y})$. Notice we must avoid trivial and undesired solutions, e.g. $m=0$, which satisfy (\ref{eq:goal}) but provide no true information regarding the uncertainty. To that end, we consider 
solving the following problem
\begin{align}
\label{eq:mask}
\tag{P1}
\begin{split}
    \max_m\; \norm{m}_1 \quad \text{subject to} \quad & \mathbb{E}_{y|x}[d_m(y, \hat{y})] \le \alpha, \\
                                                 & \vv{0} \le m\le \vv{1}.
\end{split}
\end{align}
In words, we wish to find the minimal masking required to satisfy the distortion constraint. Notice that Problem (\ref{eq:mask}) does not depend on $\beta$ as we consider an instance regression problem for a given $x$ and an originating $y$. 
We bring back the probabilistic condition using $\beta$ in the next section, when addressing the whole image manifold. 

Under certain conditions, the optimal solution is readily given in a closed-form expression as shown by the next theorem.

\begin{theorem}
\label{thm:optimal}
Consider Problem (\ref{eq:mask}) with the distortion measure $d(y,\hat y)=\norm{y-\hat y}_p^p$ where $p>1$. Then, for sufficiently small $\alpha$, the optimal mask $m^*$ admits a closed-form solution given by
\begin{equation*}
    m_{(i)}^*=\alpha^\frac{1}{p}\frac{ q_{(i)}}{\big(\sum_{j=1}^n q_{(j)}\big)^\frac{1}{p}}.
\end{equation*}
where for all $j\in[n]$
\begin{equation*}
    q_{(j)} \triangleq \frac{1}{\left(  \mathbb{E}_{y|x}[|\hat{y}_{(j)} - y_{(j)}|^p] \right)^{1/(p-1)}}.
\end{equation*}
\end{theorem}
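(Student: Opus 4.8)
The plan is to reduce Problem~(\ref{eq:mask}) to a finite-dimensional convex program and then read off the optimizer from the KKT conditions. First I would expand the masked distortion: since $d(y,\hat y)=\norm{y-\hat y}_p^p=\sum_i |y_{(i)}-\hat y_{(i)}|^p$, the Hadamard product factors through each coordinate, $d_m(y,\hat y)=\sum_i m_{(i)}^p |y_{(i)}-\hat y_{(i)}|^p$, so taking the conditional expectation gives $\mathbb{E}_{y|x}[d_m(y,\hat y)]=\sum_i m_{(i)}^p e_{(i)}$, where $e_{(i)}\triangleq\mathbb{E}_{y|x}[|\hat y_{(i)}-y_{(i)}|^p]$ and $q_{(i)}=e_{(i)}^{-1/(p-1)}$. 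Thus (\ref{eq:mask}) becomes: maximize the linear objective $\sum_i m_{(i)}$ subject to the convex constraint $\sum_i m_{(i)}^p e_{(i)}\le\alpha$ (the map $t\mapsto t^p$ is convex for $p>1$) and the box constraints $\vv{0}\le m\le\vv{1}$. I would assume $0<e_{(i)}<\infty$ for all $i$, so the $q_{(i)}$ are finite and positive.

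Next I would remove the upper box constraint under the ``sufficiently small $\alpha$'' hypothesis. The candidate to be produced is $m_{(i)}^*=\alpha^{1/p}q_{(i)}/(\sum_j q_{(j)})^{1/p}$, and $m_{(i)}^*\le 1$ holds precisely when $\alpha\le (\sum_j q_{(j)})/q_{(i)}^p$ for every $i$; this is the explicit threshold hiding behind ``sufficiently small,'' and once it holds, a maximizer of the relaxed problem (with $m\le\vv{1}$ dropped) is automatically feasible for, hence optimal for, the original problem.

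Then I would solve the relaxed problem. Since we maximize a concave (linear) function over a convex set with a strictly interior feasible point ($m=\vv{0}$ gives $0<\alpha$), Slater's condition holds and the KKT conditions are necessary and sufficient for global optimality. Monotonicity of the objective forces the distortion constraint to be active at the optimum: if it were slack, then (for $\alpha$ small, $m=\vv{1}$ is infeasible, so) some coordinate has $m_{(i)}^*<1$ and could be increased while staying feasible, contradicting optimality. Writing the Lagrangian $\sum_i m_{(i)}-\lambda\big(\sum_i m_{(i)}^p e_{(i)}-\alpha\big)$ and setting the gradient to zero gives $1=\lambda p\, m_{(i)}^{p-1} e_{(i)}$, i.e. $m_{(i)}=(\lambda p)^{-1/(p-1)} e_{(i)}^{-1/(p-1)}=c\, q_{(i)}$ for a common constant $c>0$ (the nonnegativity multipliers vanish since this solution is strictly positive). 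Substituting $m_{(i)}=c\, q_{(i)}$ into the active constraint and using the identity $q_{(i)}^p e_{(i)}=e_{(i)}^{1-p/(p-1)}=e_{(i)}^{-1/(p-1)}=q_{(i)}$ yields $c^p\sum_j q_{(j)}=\alpha$, so $c=\alpha^{1/p}/(\sum_j q_{(j)})^{1/p}$ and the stated closed form follows.

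Finally I would verify feasibility of $m^*$ for the original Problem~(\ref{eq:mask}): it is nonnegative, it satisfies the distortion constraint with equality by construction, and it satisfies $m^*\le\vv{1}$ by the small-$\alpha$ bound, so it is the optimal mask. The one step I expect to require genuine care is making the ``sufficiently small $\alpha$'' condition precise and arguing cleanly that the box constraint $m\le\vv{1}$ is inactive, so that the closed form for the relaxed program transfers back to (\ref{eq:mask}); the Lagrangian computation and the exponent bookkeeping are routine by comparison.
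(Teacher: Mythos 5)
Your proposal is correct and follows essentially the same route as the paper: write the expected masked distortion coordinate-wise as $\sum_i m_{(i)}^p\,\mathbb{E}_{y|x}[|\hat y_{(i)}-y_{(i)}|^p]$, apply Lagrangian/KKT stationarity to get $m_{(i)}\propto q_{(i)}$, and pin down the constant from the active distortion constraint, with the small-$\alpha$ condition guaranteeing $m^*\le\vv{1}$. The only difference is that you supply justifications the paper leaves implicit (convexity of the feasible set, Slater's condition, the argument that the constraint is active, and the explicit threshold on $\alpha$), which strengthens rather than changes the argument.
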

\begin{proof}
See Appendix\,\ref{apdx:proof}.
\end{proof}

Note that the above implies the following: 
\begin{corollary} The optimal mask, as defined by the solution of (\ref{eq:mask}), is directly related to the uncertainty measure in Equation (\ref{def:uncertainty}) via 
\begin{equation*}
    U(\hat y|x) \propto \sum_{i=1}^n (m_{(i)}^*)^{-(p-1)}.
\end{equation*}
\end{corollary}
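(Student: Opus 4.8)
The plan is a direct substitution of the closed-form mask from Theorem~\ref{thm:optimal} into the definition of uncertainty, followed by bookkeeping of exponents. First I would use the fact that the distortion $d(y,\hat y)=\norm{y-\hat y}_p^p$ separates over coordinates, so that
\[
U(\hat y|x)=\E_{y|x}\!\left[\sum_{i=1}^n |y_{(i)}-\hat y_{(i)}|^p\right]=\sum_{i=1}^n \E_{y|x}\big[|y_{(i)}-\hat y_{(i)}|^p\big].
\]
By the definition of $q_{(i)}$ in Theorem~\ref{thm:optimal}, the inner expectation is exactly $q_{(i)}^{-(p-1)}$, hence $U(\hat y|x)=\sum_{i=1}^n q_{(i)}^{-(p-1)}$.

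Next I would invert the relation between $m^*_{(i)}$ and $q_{(i)}$. Raising the closed form $m^*_{(i)}=\alpha^{1/p}\, q_{(i)}\big/\big(\sum_j q_{(j)}\big)^{1/p}$ to the power $-(p-1)$ gives
\[
(m^*_{(i)})^{-(p-1)} = \alpha^{-(p-1)/p}\Big(\textstyle\sum_{j=1}^n q_{(j)}\Big)^{(p-1)/p}\, q_{(i)}^{-(p-1)},
\]
in which the prefactor is independent of $i$. Summing over $i$ and substituting the expression for $U(\hat y|x)$ from the first step yields
\[
\sum_{i=1}^n (m^*_{(i)})^{-(p-1)} = \alpha^{-(p-1)/p}\Big(\textstyle\sum_{j=1}^n q_{(j)}\Big)^{(p-1)/p}\, U(\hat y|x),
\]
i.e. $U(\hat y|x)=C\sum_{i=1}^n (m^*_{(i)})^{-(p-1)}$ with $C=\big(\alpha\big/\sum_j q_{(j)}\big)^{(p-1)/p}$, which is the asserted proportionality. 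One should note that $C$ depends only on $\alpha$ and on $x$ (through the $q_{(j)}$, which are functions of $x$ alone) and not on the coordinate index, so it is a legitimate constant of proportionality for a fixed instance $x$.

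I do not expect any genuinely hard step here; the proof is essentially algebraic. The two points requiring care are (i) that $p>1$, so the exponents $1/(p-1)$ and $(p-1)/p$ are all finite and the manipulations are valid, and (ii) that the corollary is only meaningful in the regime where Theorem~\ref{thm:optimal} actually applies, namely $\alpha$ sufficiently small that no coordinate of $m^*$ is clipped at $1$; once any $m^*_{(i)}$ saturates at the upper bound the clean closed form (and hence the exact proportionality) no longer holds. Working within that regime, the claim follows immediately from the substitution above.
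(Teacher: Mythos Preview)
Your proof is correct and follows essentially the same approach as the paper: both use that $m^*_{(i)}$ is a constant (in $i$) multiple of $q_{(i)}$, so $(m^*_{(i)})^{-(p-1)}\propto q_{(i)}^{-(p-1)}=\E_{y|x}[|y_{(i)}-\hat y_{(i)}|^p]$, and then sum over $i$. Your version is simply more explicit about the constant of proportionality and about the regime of validity, which is a welcome addition.
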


\begin{proof}
\begin{equation*}
    \sum_{i=1}^n (m_{(i)}^*)^{-(p-1)} \propto \sum_{i=1}^n (q_{(i)}^*)^{-(p-1)}= \sum_{i=1}^n \mathbb{E}_{y|x}[|\hat{y}_{(j)} - y_{(j)}|^p]=U(\hat y|x).
\end{equation*}
\end{proof}

Thus, the above establishes a correspondence between the optimal mask and uncertainty as defined in (\ref{def:uncertainty}). Furthermore, it can be shown that the result of Theorem 1 can be extended to $p=1$, leading to a binary mask (see Appendix A). 

The result of Theorem\,\ref{thm:optimal} serves only as a general motivation in our work, since obtaining the optimal mask directly via the above equations is impractical for a number of reasons. First, $\alpha$ may vary depending on the user and the application. Second, for arbitrary distortion measure Problem (\ref{eq:mask}) does not admit a closed-form solution. Last and most importantly, at inference time the ground truth $y$ is unknown. Hence, our next step is to provide an estimation for the mask which requires no knowledge of $y$, can be computed for any differentiable distortion measure, and is easily adapted for various values of $\alpha$ and $\beta$.





\subsection{Method}
\label{sec:method}

Here, we aim to design an estimator for the the uncertainty mask defined by Problem (\ref{eq:mask}). We start by modeling the mask as a neural network $m_\theta(x, \hat y)$
with parameters $\theta$, which we refer to as the masking model \edit{(see details in Section~\ref{sec:experiment})}. Similar to Problem (\ref{eq:mask}), we define the following optimization problem 
\begin{align}
\label{eq:maskingmodel}
\tag{P2}
\begin{split}
    \min_\theta\; ||m_\theta(x, \hat y)-\vv{1}||_2^2 \quad \text{subject to} \quad & \mathbb{E}_{y|x}[d_{m_\theta}(y, \hat{y})] \le \alpha, \\
        & \vv{0} \le m_\theta(x, \hat y)\le \vv{1}.
\end{split}
\end{align}
Ignoring (for now) the second constraint, the Lagrangian of Problem (\ref{eq:maskingmodel}) is given by
\begin{equation}
\label{eq:Lagrangian}
    L(\theta, \mu) = ||m_\theta(x, \hat y)-\vv{1}||_2^2 + \mu\Big(\mathbb{E}_{y|x}[d_{m_\theta}(y, \hat{y})] -\alpha\Big)
\end{equation}
where $\mu>0$ is the dual variable which is considered as an hyperparameter. Given $\mu$, the optimal mask can be found by minimizing $L(\theta, \mu)$ with respect to $\theta$, which is equivalent to minimizing
\begin{equation}
    ||m_\theta(x, \hat y)-\vv{1}||_2^2 + \mu\mathbb{E}_{y|x}[d_{m_\theta}(y, \hat{y})]
\end{equation}
since $\alpha$ does not depend on $\theta$.
Thus, motivated by the above, we train our masking model using the following loss function:
\begin{equation}
    \ell(\mathcal{D}, \theta) = \sum_{(x,\hat y, y)\in \mathcal{D}} ||m_\theta(x, \hat y)-\vv{1}||_2^2 + \mu d_{m_\theta}(y, \hat{y})
\end{equation}
where $\mathcal{D}\triangleq \{(x_k,\hat y_k, y_k)\}_k$ is a dataset of triplets of the degraded input, recovered output and the true image respectively. To enforce that the mask is within $[0,1]$ we simply clip the output of masking model. Note that the proposed approach facilitates the use of any differentiable distortion measure. Furthermore, at inference time, our masking model provides an approximation of the optimal uncertainty mask -- the solution of (\ref{eq:mask}) -- without requiring $y$. 

Following the above, notice that the loss function is independent of $\alpha$ and $\beta$, hence, we cannot guarantee that $m_\theta$ satisfies condition (\ref{eq:goal}). To overcome this, we follow previous work on conformal prediction and consider the output of our masking model as an initial estimation of the uncertainty which can be calibrated to obtain strong statistical guarantees. In general, the calibrated mask $m_\lambda$ follows the form 
\begin{equation}
    m_{\lambda(i)}\triangleq F(m_{\theta(i)};\lambda),\quad \forall i=1,...,n,
\end{equation}
where $F(\cdot;\lambda)$ is a monotonic non-decreasing function, parameterized by a global scalar which is tuned throughout the calibration process. 
As our goal is to reduce the distortion below a predefined fixed level, the calibrated mask need to be inversely proportional to the distortion. Since our predicted mask is proportional to the distortion we use the following formula\footnote{We add a small value, $\epsilon$, to the denominator for numerical stability \edit{and clip the value of $m$ to be in $[0,1]$}}. 
\del{Here we consider a simple form,}
\begin{equation}
    \label{eq:calibratedmask}
    m_{\lambda(i)}= \frac{\lambda}{\edit{\epsilon} + 1-m_{\theta(i)}}\quad \forall i=1,...,n,
\end{equation}
which was found empirically to perform well in our experiments. To set the value of $\lambda$ we assume a given calibration dataset $\mathcal{C} = \{(x_k, \hat y_k, y_k)\}_k $ and perform the following calibration procedure:
\begin{enumerate}
    \item For each pair $(x_k, \hat y_k)$ we predict a mask $m_k=m_\theta(x_k,\hat y_k)$.
    \item We define a calibrated mask $m_{\lambda_k}$ according to (\ref{eq:calibratedmask}) where $\lambda_k$ is the maximal value such that 
    \begin{equation*}
        d\big(m_{\lambda_k}\odot y, m_{\lambda_k}\odot \hat y\big) \leq \alpha.
    \end{equation*}
    \item Given all $\{\lambda_k\}$, we set $\lambda$ to be the $1 - \beta$ quantile of $\{\lambda_k\}$, i.e. the maximal value for which at least $\beta$ fraction of the calibration set satisfies condition (\ref{eq:mask}).
\end{enumerate}
The final mask, calibrated according to $\alpha$ and $\beta$, is guaranteed to satisfy condition (\ref{eq:goal}) above.

The benefits of our masking approach are as follows. First, it provides a measure of uncertainty which can be easily interpreted. In addition, our training process can accept any distortion measure \edit{and learns the relationships and correlations between different pixels in the image.} 
\edit{Note that}\del{As a bonus, and the}\edit{our} model is trained only once, irrespective of the values of $\alpha$ and $\beta$. Thus, our model can be easily adapted for different values of $\alpha$ and $\beta$ via our simple calibration process. Finally, at inference, we can produce our uncertainty map without the knowledge of the ground truth while still providing strong statistical guarantees on our output mask.




\section{Experiments}


\subsection{Datasets and Tasks}

\parbasic{Datasets} Two data-sets are used in our experiments:

\begin{addmargin}[1em]{0em}

\category{Places365} \citep{places365}: A large collection of 256x256 images from 365 scene categories. We use 1,803,460 images for training and 36,500 images for validation/test.

\category{Rat Astrocyte Cells} \citep{rat}: A dataset of 1,200 uncompressed images of scanned rat cells of resolution 990$\times$708.  We crop the images into 256$\times$256 tiles, and randomly split them into train and validation/test sets of sizes 373,744 and 11,621 respectively.
The tiles are partially overlapped as we use stride of 32 pixels when cropping the images.

\end{addmargin}

\parbasic{Tasks} We consider the following image-to-image tasks:

\begin{addmargin}[1em]{0em}

\category{Completion}:
Using grayscale version of Places365, we remove a middle vertical and horizontal stripe of 32 pixel width, and aim to reconstruct the missing part.

\category{Super Resolution}:
\edit{We experiment with this task on the two data-sets. 
The images are scaled down to 64 $\times$ 64 images and the goal is to reconstruct the original image.}
\del{Using the Rat Astrocyte Cells dataset, we downscale the tiles to 64$\times$64 images, and aim to reconstruct the original tile.}

\category{Colorization}:
We convert the Places365 images to grayscale and aim to recover their colors.

\end{addmargin}

Figure~\ref{fig:tasks} shows a sample input and output for each of the above tasks.

\subsection{Experimental Details and Settings}
\label{sec:experiment}

\parbasic{Image-to-Image Models} We start by training models for the above three tasks.  Note that these models are not intended to be state of the art, but rather used to demonstrate the uncertainty estimation technique proposed in this work. We use the same model architecture for all tasks: an 8 layer U-Net.
For each task we train two versions of the network: (i) A simple regressor; and (ii) A conditional GAN, where the generator plays the role of the reconstruction network. 
For the GAN, the discriminator is implemented as a 4 layer CNN. We use the L1 loss as the objective for the regressor, and augment it with an adversarial loss in the conditional GAN ($\lambda = 20$), as in \citet{pix2pix}.
All models are trained for ten epochs using the Adam optimizer with a learning rate equal to $1\text{e-}5$ and a batch size equal to $50$. 

\parbasic{Mask Model} For our masking model we also use an 8 layer U-Net architecture.  This choice was made for simplicity and compatibility with previous work.  
The \edit{input for the mask model is the concatenation of the degraded image and the predicated image on the channel axis. It's output is a mask having the same shape as the predicted image, with values in the range $[0,1]$. The} masking model is trained using \del{the predictions of the image-to-image model with} the loss function described in Section \ref{sec:method} with $\mu=2$, learning rate of $1\text{e-}5$ and a batch size of 25.

\parbasic{Experiments} We consider the L1, L2, SSIM and LPIPS distances.
We set aside $1,000$ samples from each validation set for calibration and use the remaining samples for evaluation.  
We demonstrate the flexibility of our approach by conducting experiments on a variety of $12$ settings: (i) Image Completion: \{Regressor, GAN\} $\times$ \{L1, LPIPS\}; (ii) Super Resolution: \{Regressor, GAN\} $\times$ \{L1, SSIM\}; and (iii) Colorization: \{Regressor, GAN\} $\times$ \{L1, L2\}.

\parbasic{Thresholds} Recall that given a predicted image, our goal is to find a mask that, when applied to both the prediction and the (unknown) reference image, reduces the distortion between them to a predefined threshold with high probability $\beta$.
Here we fix $\beta = 0.9$ and choose the threshold to be the $0.1$-quantile of each measure computed on a random sample from the validation set, i.e. roughly $10\%$ of the predictions are already considered sufficiently good and do not require masking at all.

\subsection{Competitor Techniques}

\parbasic{Quantile -- Interval-Based Technique} We compare our method to the quantile regression option presented in \citep{angelopoulos2022image}. As described in Section \ref{sec:prior}, their uncertainty is formulated by pixel-wise confidence intervals, calibrated so as to guarantee that with high probability the true value of the image is indeed within the specified range. 
While their predicted confidence intervals are markedly different from the expected distortion we consider, we can use these intervals as a heuristic mask. 
For completeness, we also report the performance of the quantile regression even when it is less suitable, i.e. when the underlying model is a GAN and when the distortion loss is different from L1.
We note again that for the sake of a fair comparison, our implementation of the masking model uses exactly the same architecture as the quantile regressor.

\parbasic{Opt -- Oracle} We also compare our method with an oracle, denoted Opt, which computes an optimal mask using the following optimization procedure that has access to the reference image $y$:
\begin{equation}
    \ell(m) =\min_{m} ||m-\vv{1}||_2^2 + \mu d_m(y, \hat{y}).
\end{equation}
This minimization is obtained via gradient descent, using the Adam optimizer with a step-size $0.01$, iterating until the destination distortion value is attained. 
Clearly, this approach does not require calibration, as the above procedure ensures that all the masked samples are below the target distortion threshold.

\subsection{Results and Discussion}

We now show a series of results that demonstrate our proposed uncertainty masking approach, and its comparison with Opt and Quantile.\footnote{Due to space limitations, we show more extensive experimental results in Appendix B, while presenting a selected portion of them here.} We begin with a representative visual illustration of our proposed mask for several test cases, see Figure \ref{fig:visual}.  As can be seen, the mask aligns well with the true prediction error, while still identifying sub-regions of high certainty. 
\begin{figure}
    \centering
    \includegraphics[width=.9\textwidth]{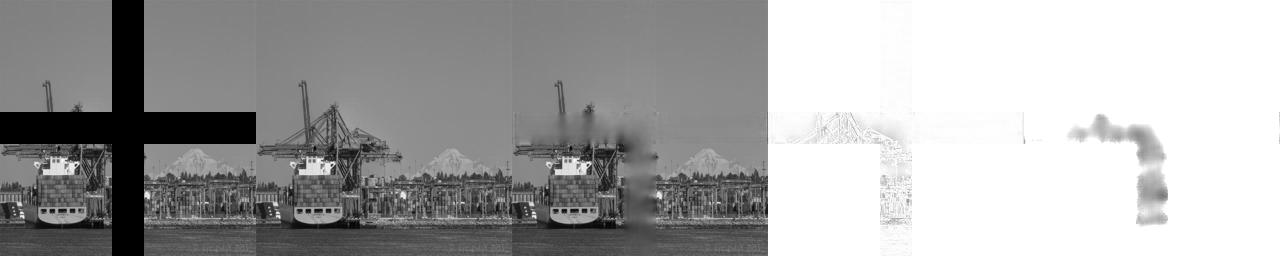}
    \\
    \includegraphics[width=.9\textwidth]{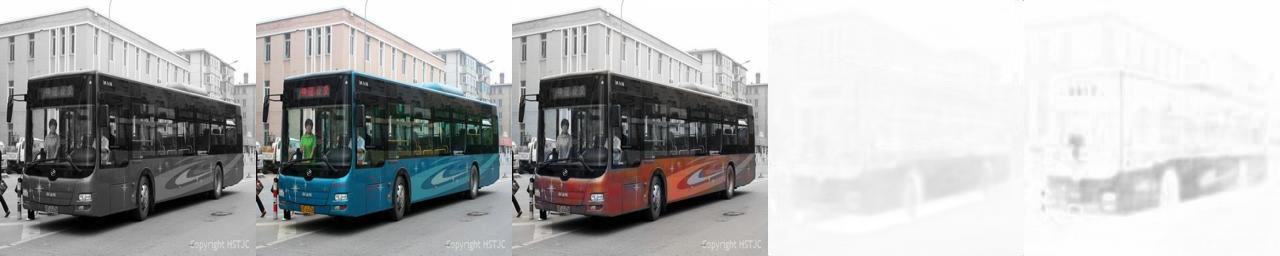}
    \\
    \includegraphics[width=.9\textwidth]{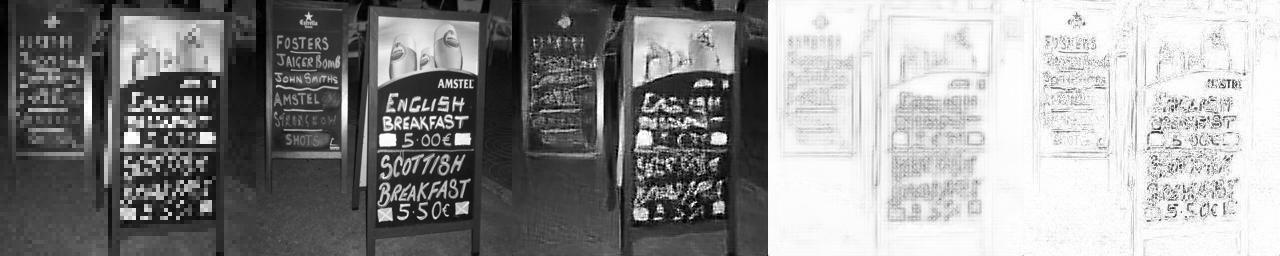}
    \caption{Example of uncertainty masks for image completion (top), colorization (middle) and super resolution (bottom): the images from left to right are the input to the model, the reference image, the output of the model, the calibrated uncertainty mask trained on the L1 loss and the L1 distance heat-map.
    In the image completion task the bottom left corner is richer in details and thus there is high uncertainty regarding this part in the reconstructed image.
    \edit{For the colorization task, the clothes and the colored area of the bus are the most uncertained regions respectively as these regions can be colorized with a large variaty of colors.
    In the super resolution task the uncertained regions are around the edges and text while smooth surfaces are more certain parts.}
    Note that the actual error (right most image) is not equivalent to our definition of uncertainty and we only present it as a reference.}
    \label{fig:visual}
\end{figure}


In Figure \ref{fig:ResultsSummary} we explore the performance of the three compared methods with respect to (i) the probabilistic guarantee, (ii) the size of the masks obtained, and (iii) the correlation between mask sizes -- Quantile and Ours versus Opt. As can be seen from the top row, all three methods meet the distortion threshold with fewer than $10\%$ exceptions, as required. Naturally, Opt does not have outliers since each mask is optimally calibrated by its computation. The spread of loss values tends to be higher with Quantile, indicating weaker performance. The colorization results, here and below, seem to be more challenging, with a smaller performance increase for our method.

The above probabilistic results are not surprising as they are the outcome of the calibration stage. The main question remaining relates to the size of the mask created so as to meet this threshold distortion. The middle row in Figure \ref{fig:ResultsSummary} presents histograms of these sizes, showing that our approach tends to produce masks that are close in size to those of Opt; while Quantile produces larger, and thus inferior, masked areas. Again, our colorization results show smaller if any improvement. 

The bottom row in Figure \ref{fig:ResultsSummary} shows the correlation between mask sizes, and as can be observed, a high such correlation exists between our method and Opt, while Quantile produces less correlated and much heavier masks. 





Quantitative results are presented in Table~\ref{tab:colorization-results}
for the completion, super resolution, and colorization tasks. First and foremost, the results obtained by our method surpass those of Quantile across the board. 
Our method exhibits smaller mask size $\mathbf{\|M\|}$, aligned well with the masks obtained by Opt. Quantile, as expected, produces larger masks. In terms of the correlation between the obtained mask-size and the unmasked distortion value $\mathbf{C(M, D)}$, our method shows high such agreement, while Quantile lags behind. This correlation indicates a much desired adaptivity of the estimated mask to the complexity of image content and thus to the corresponding uncertainty. 
The correlation $\mathbf{C(M, M_{opt})}$ between Opt's mask size and the size of the mask obtained by our method or Quantile show similar behavior, aligned well with behavior we see in the bottom of Figure \ref{fig:ResultsSummary}.



\begin{figure}[t]
    \centering
    \begin{tabular}{c c c}
        \includegraphics[trim={0 0 0cm 0},clip,width=.33\textwidth, height=6cm]{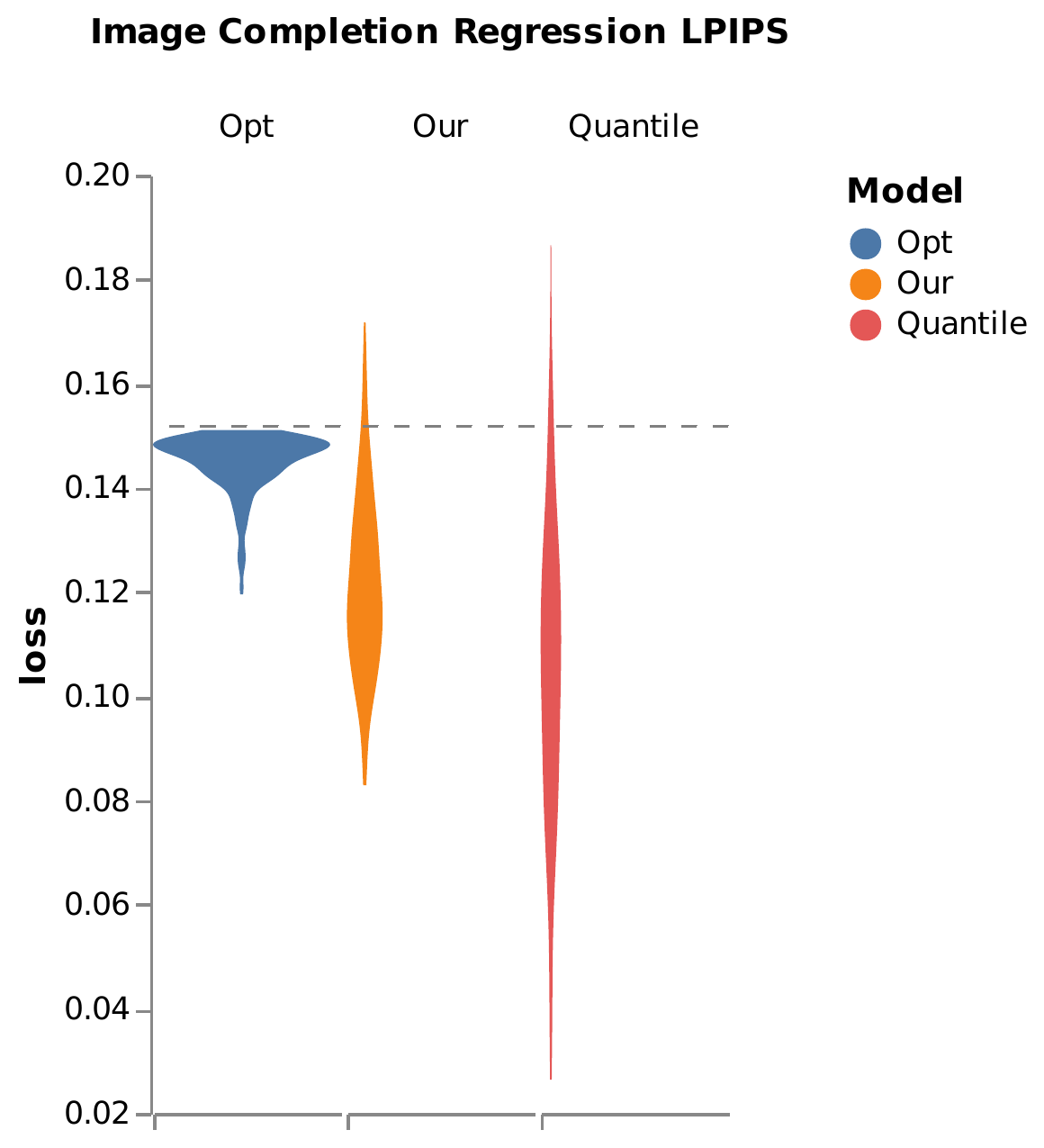} 
        & 
        \includegraphics[trim={0 0 0cm 0},clip,width=.33\textwidth, height=6cm]{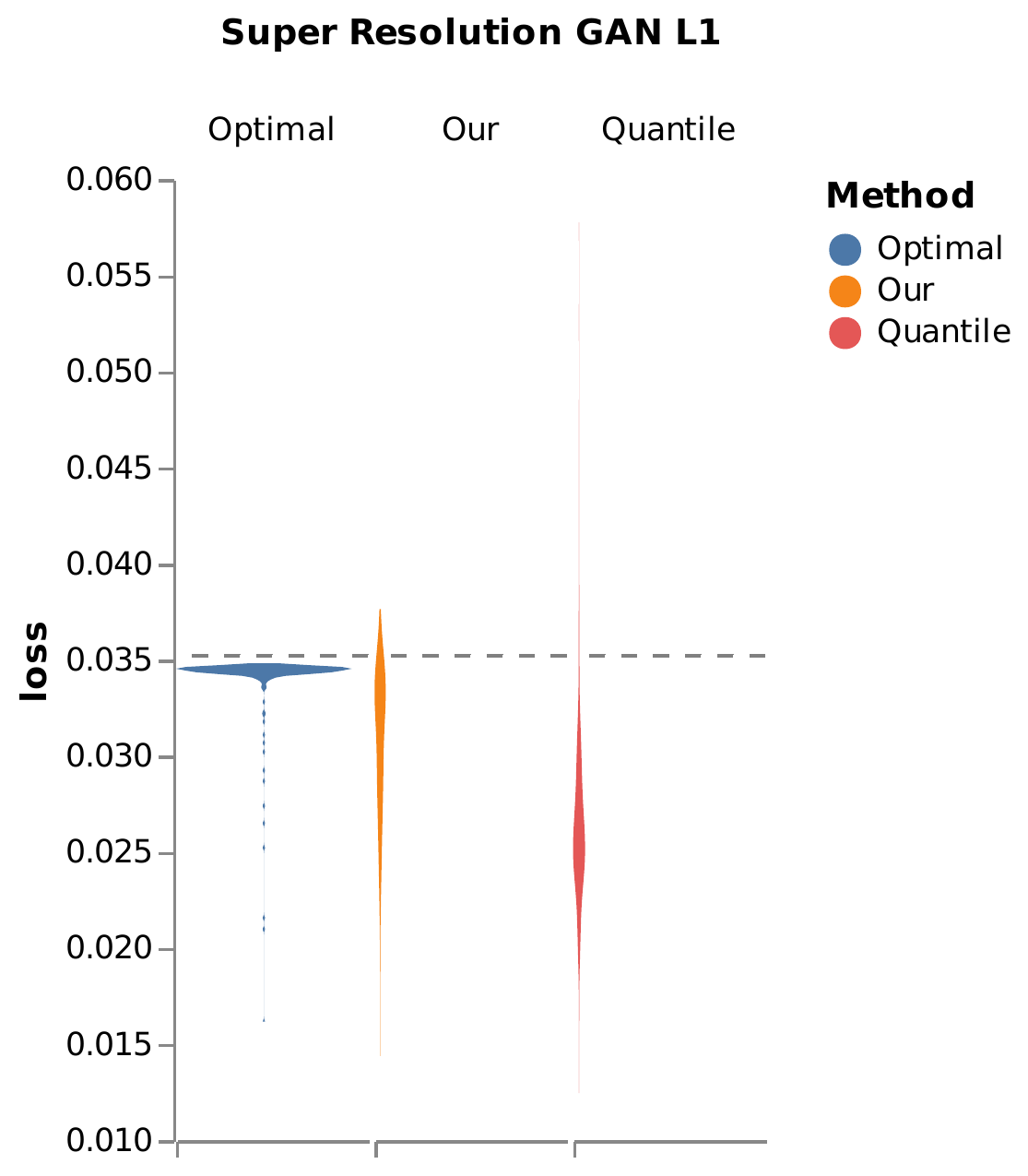} 
        &
        \includegraphics[trim={0 0 0cm 0},clip,width=.33\textwidth, height=6cm]{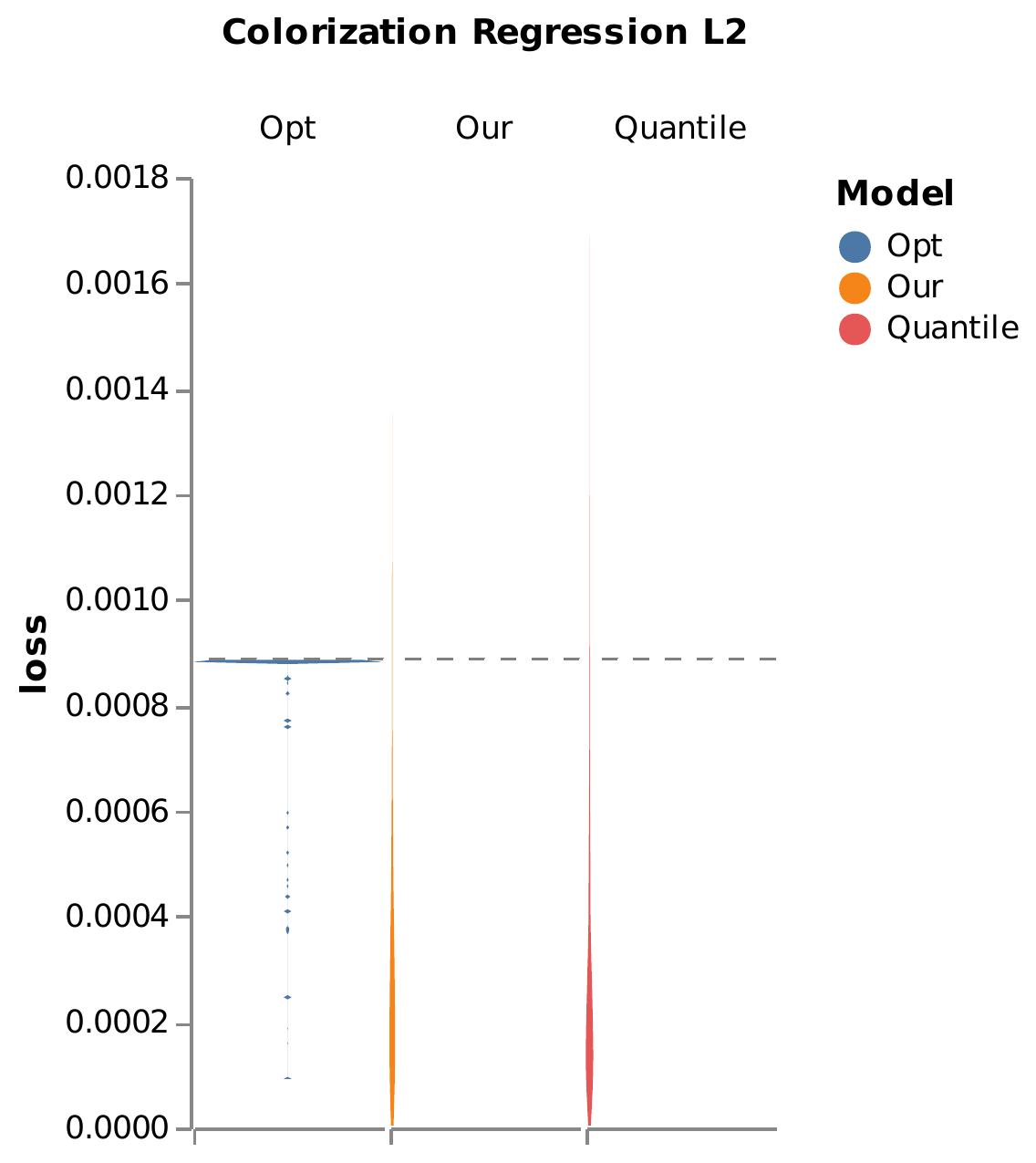} 
        \\ 
        \includegraphics[trim={0 0 0cm 20pt},clip,width=.33\textwidth, height=5cm]{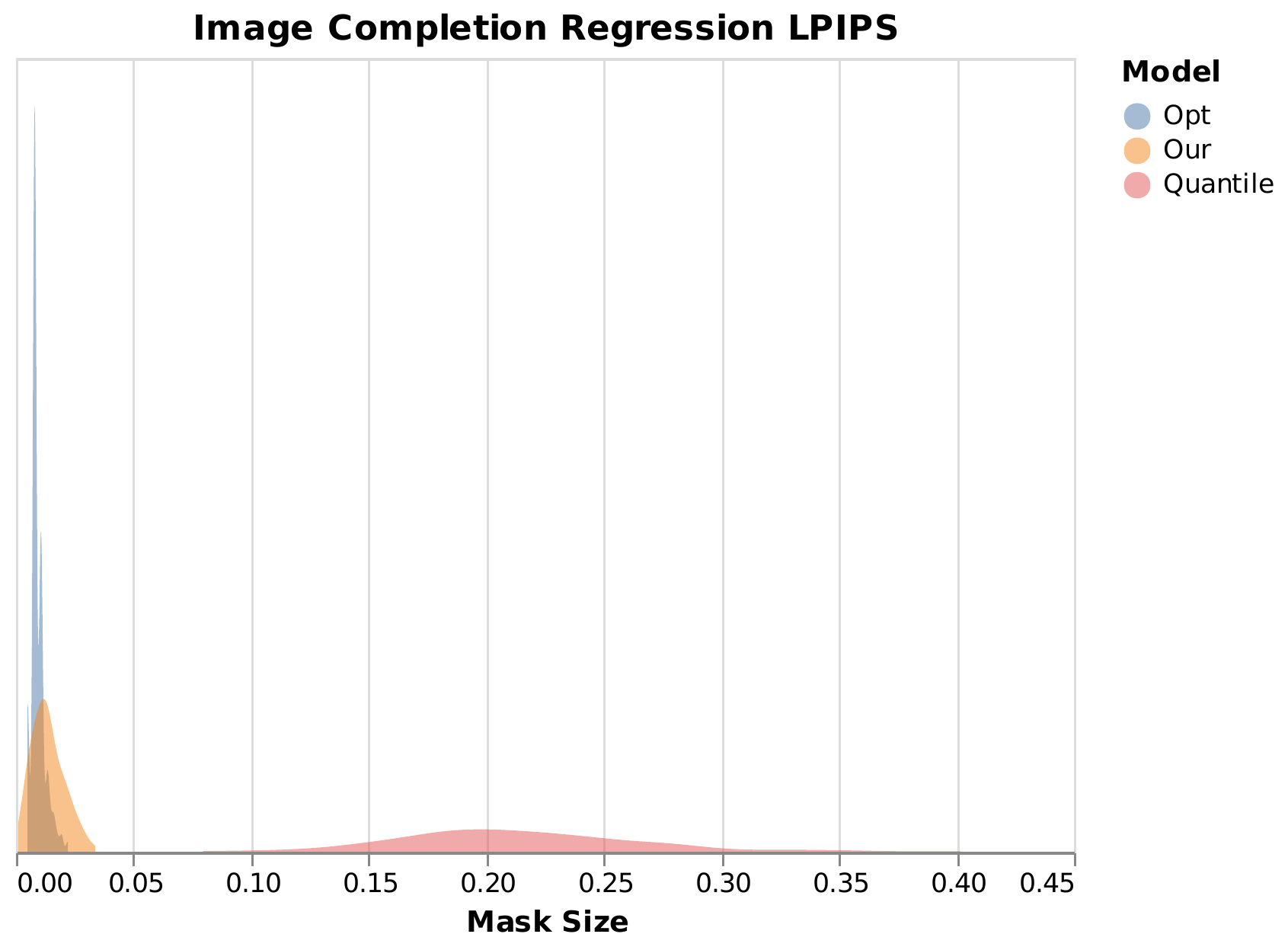}
        &
        \includegraphics[trim={0 0 0cm 20pt},clip,width=.33\textwidth, height=5cm]{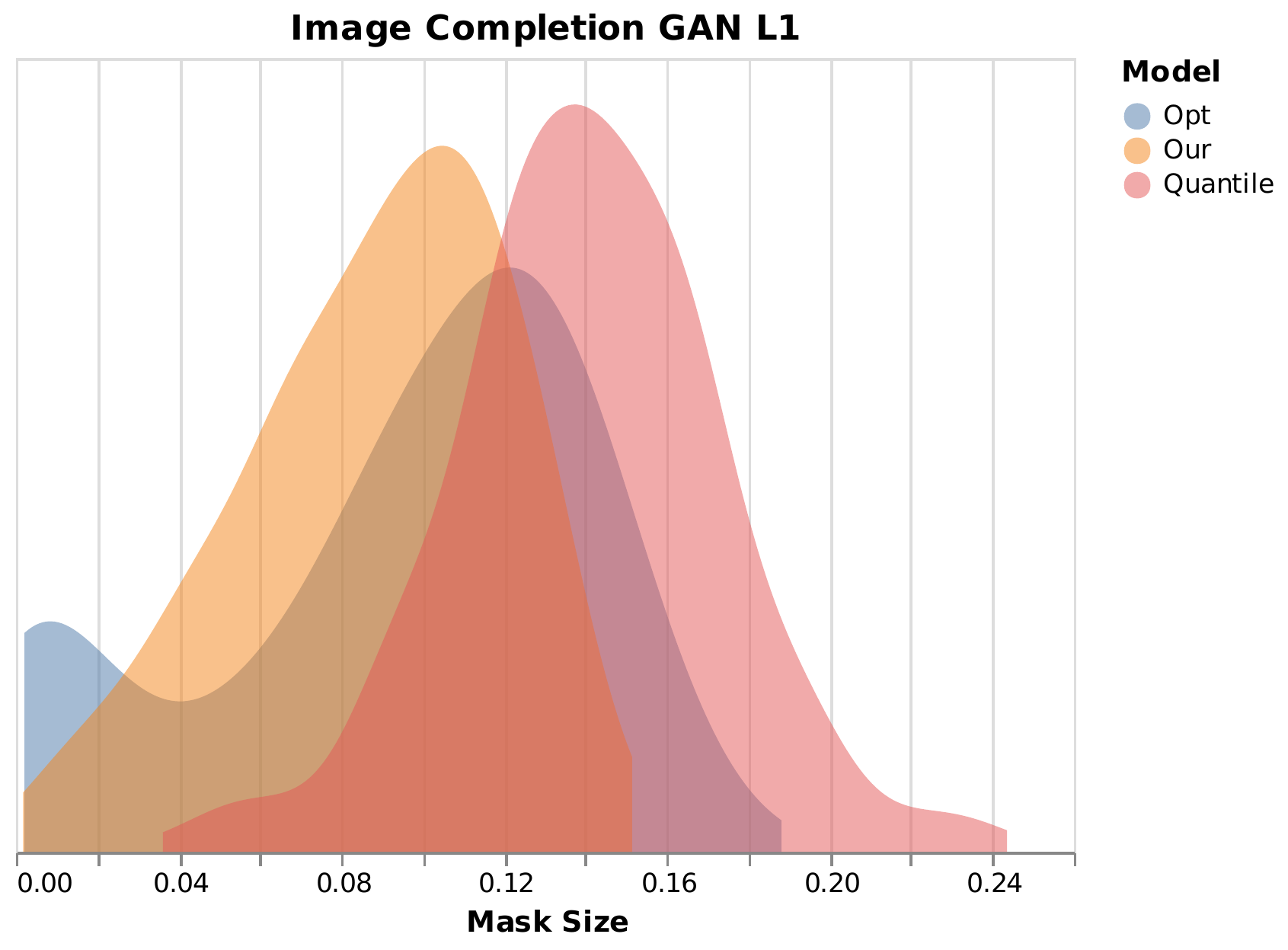}
        &
        \includegraphics[trim={0 0 0cm 20pt},clip,width=.33\textwidth, height=5cm]{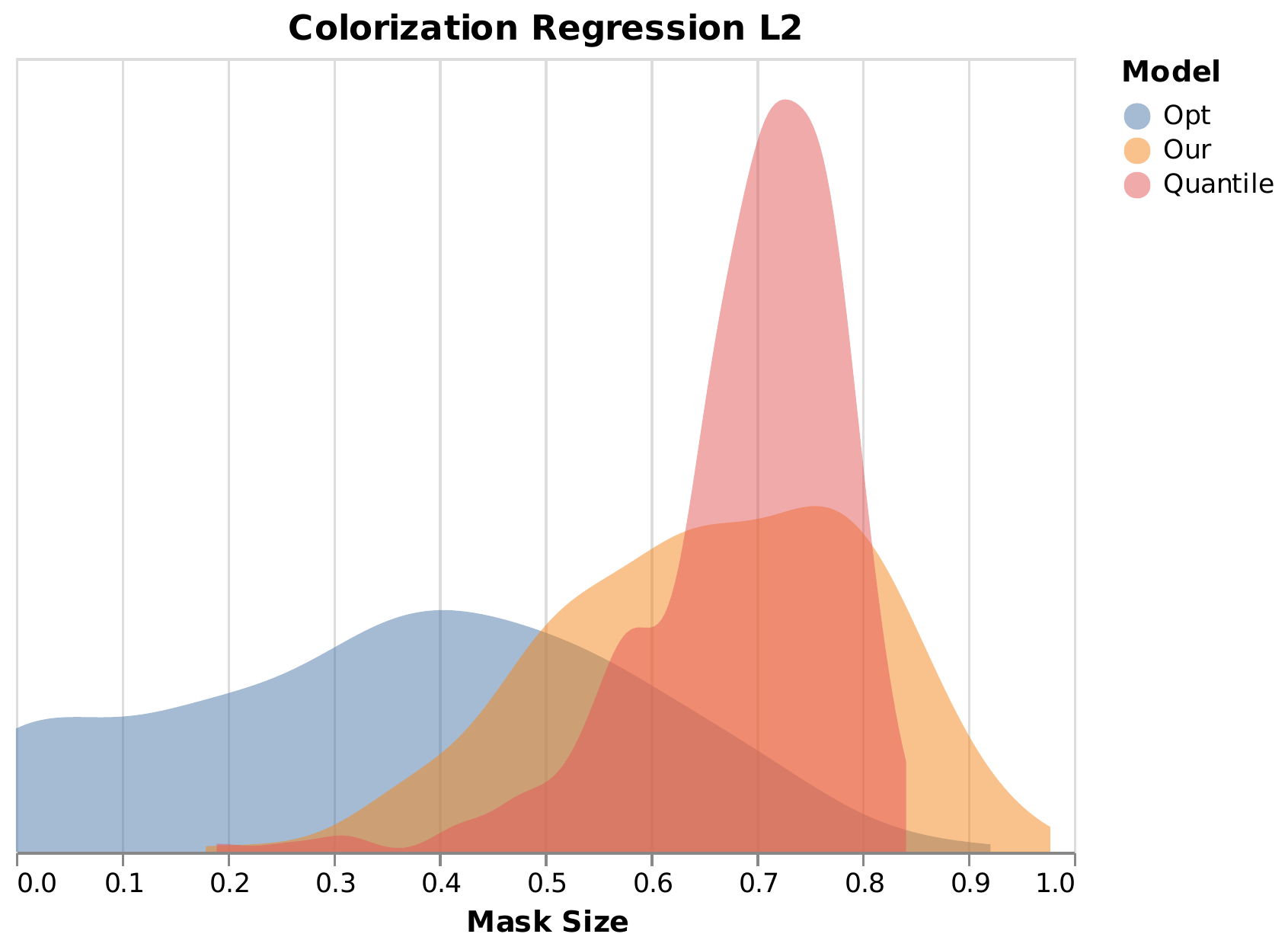}
        \\
        \includegraphics[trim={0cm 0 0cm 20pt},clip,width=.33\textwidth, height=4cm]{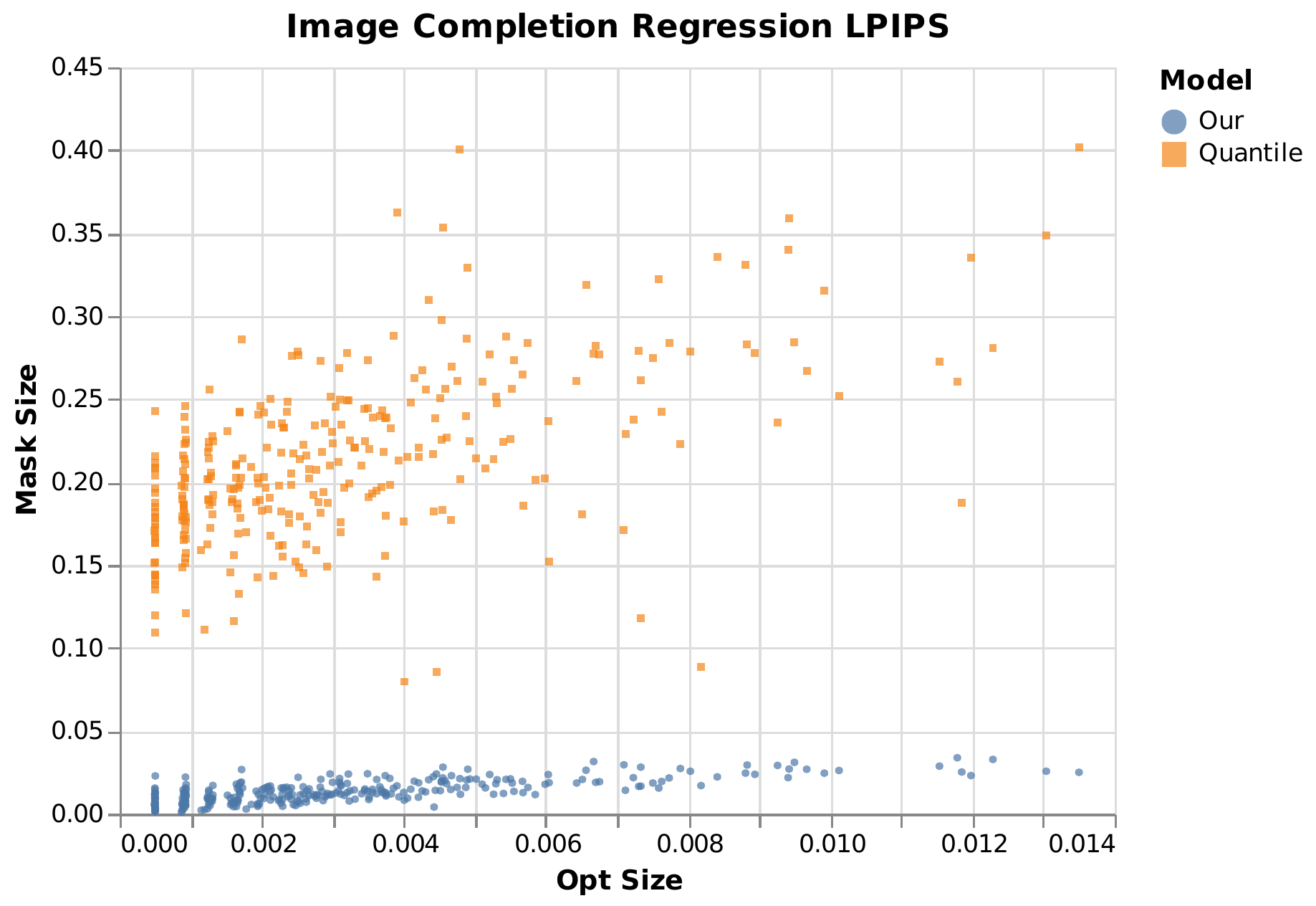}
        &
        \includegraphics[trim={0 0 0cm 20pt},clip,width=.33\textwidth, height=4cm]{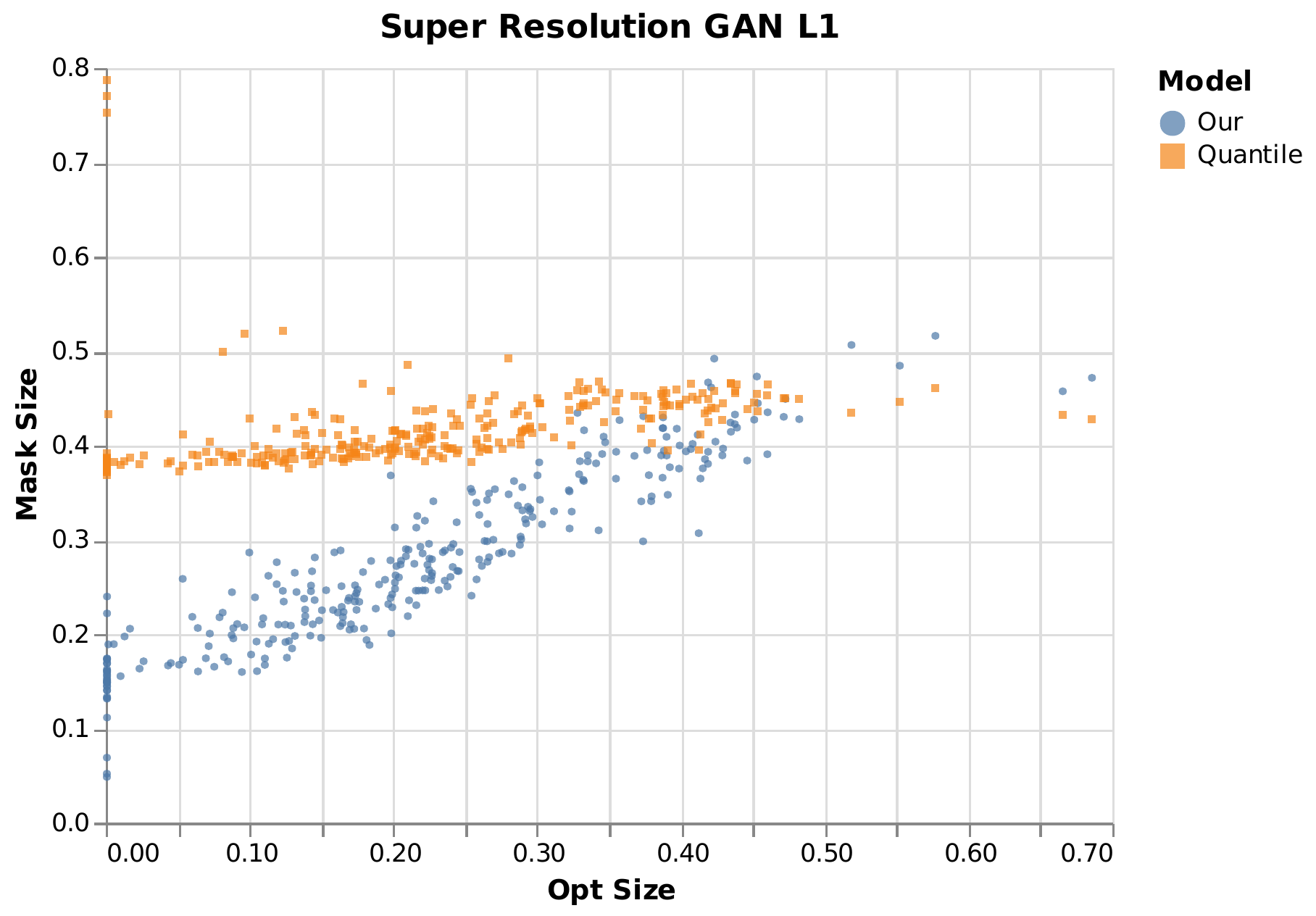}
        &
        \includegraphics[trim={0 0 0cm 20pt},clip,width=.33\textwidth, height=4cm]{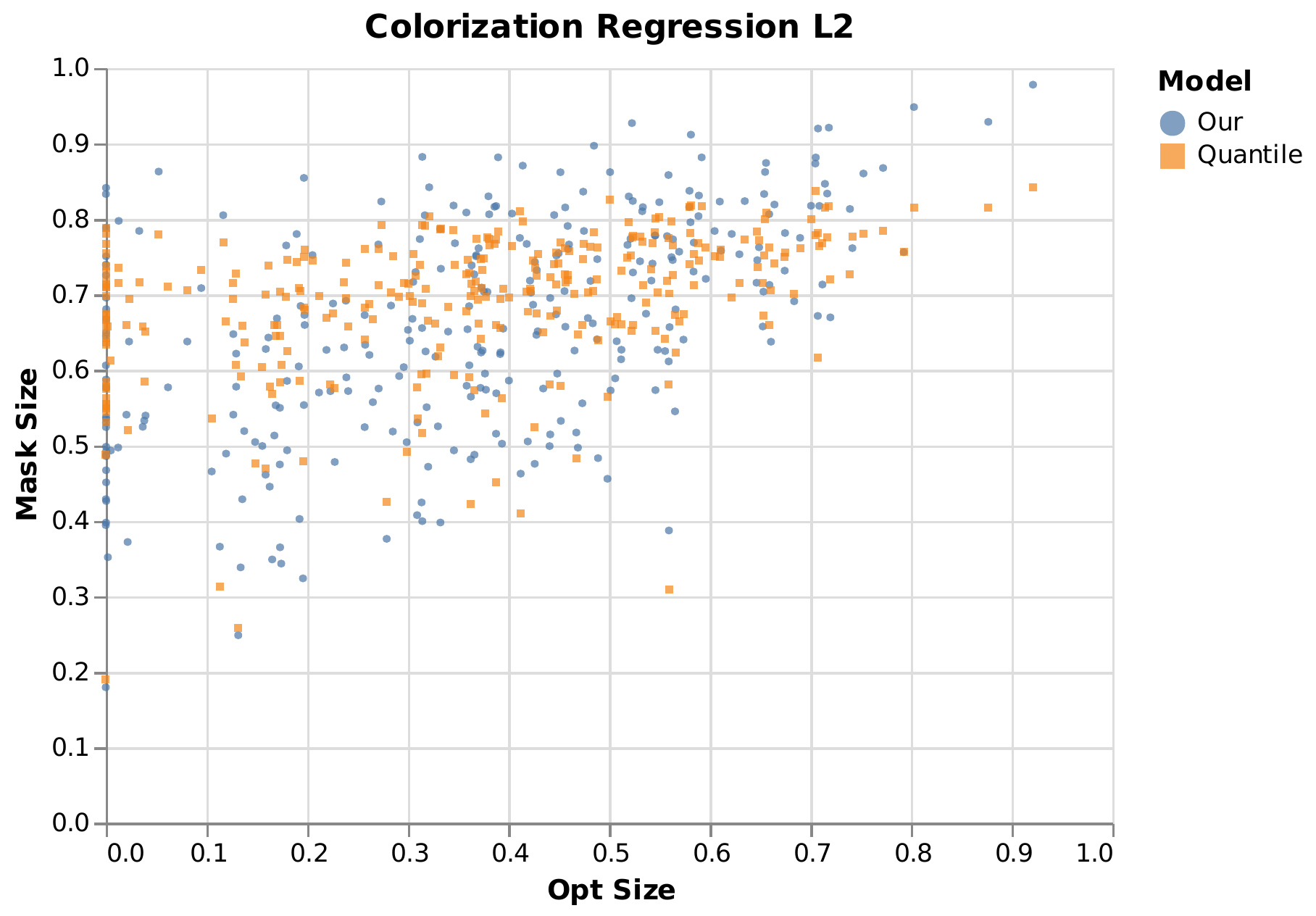}
    \end{tabular}

    \caption{Summary of results for image completion (left), super-resolution (middle) and colorization (right). The top row shows the distribution of distortion values after masking versus the user-chosen threshold. The middle row presents histograms of mask sizes for the three compared methods. The bottom row describes the inter-relation between Opt's mask size and the sizes obtained by Our method and Quantile's. }
    \label{fig:ResultsSummary}
\end{figure}

\begin{table}
    \centering
    \begin{tabular}{c c||c|c|c||c|c||c|c}
        & & \multicolumn{3}{c||}{$\mathbf{\| M \| \quad (\downarrow)}$} & \multicolumn{2}{c||}{$\mathbf{C(M, D) \quad (\uparrow)}$} & \multicolumn{2}{c}{$\mathbf{C(M, M_{opt}) \quad (\uparrow)}$} \\
        \cline{3-5}\cline{6-7}\cline{8-9}
        \cline{3-5}\cline{6-7}\cline{8-9}
        Network & Distance & Opt & Ours & Quantile & Ours & Quantile & Ours & Quantile \\
        \hline\hline
        Regression & L1 & $0.09$ & \best{0.10} & $0.15$ & \best{0.89} & $0.78$ & \best{0.89} & $0.76$ \\
        \hline
        Regression & LPIPS & $0.01$ & \best{0.01} & $0.20$ & \best{0.54} & $0.51$ & \best{0.89} & $0.77$ \\
        \hline
        GAN & L1        & $0.09$ & \best{0.09} & $0.14$ & \best{0.95} & $0.85$ & \best{0.94} & $0.80$ \\
        \hline
        GAN & LPIPS        & $0.01$ & \best{0.01} & $0.08$ & \best{0.31} & $0.24$ & \best{0.50} & $0.23$

\\
\hline
\\    
    
        \hline
        \multicolumn{9}{c}{Rat Astrocyte Cells} \\
        \hline
        Regression & L1 & $0.24$ & \best{0.26} & $0.28$ & \best{0.99} & $0.54$ & \best{0.95} & $0.88$ \\
        \hline
        Regression & SSIM & $0.03$ & \best{0.03} & $0.13$ & \best{0.66} & $0.64$ & \best{0.82} & $0.57$ \\
        \hline
        GAN & L1        & $0.26$ & \best{0.30} & $0.40$ & \best{0.94} & $0.63$ & \best{0.80} & $0.72$ \\
        \hline
        GAN & SSIM        & $0.03$ & \best{0.03} & $0.13$ & \best{0.79} & $0.63$ & \best{0.83} & $0.63$
\\

        \hline
        \multicolumn{9}{c}{Places365} \\
        \hline
        Regression & L1 & $0.30$ & \best{0.36} & $0.39$ & \best{0.99} & $0.97$ & \best{0.95} & $0.94$ \\
        \hline
        Regression & SSIM & $0.10$ & \best{0.23} & $0.48$ & \best{0.89} & $0.85$ & \best{0.94} & $0.84$ \\
        \hline
        GAN & L1    & $0.37$ & \best{0.38} & $0.47$ & \best{0.97} & $0.81$ & \best{0.95} & $0.67$ \\
        \hline
        GAN & SSIM & $0.10$ & \best{0.12} & $0.51$ & \best{0.86} & $0.81$ & \best{0.92} & $0.86$
\\
\hline
\\
\hline
Regression & L1 & $0.27$ & \best{0.37} & $0.40$ & \best{0.68} & $0.43$ & \best{0.57} & $0.46$ \\
        \hline
        Regression & L2 & $0.18$ & \best{0.37} & $0.38$ & \best{0.57} & $0.30$ & \best{0.60} & $0.48$ \\
        \hline
        GAN & L1 & $0.27$ & \best{0.38} & $0.40$ & \best{0.58} & $0.40$ & \best{0.60} & $0.52$ \\
        \hline
        GAN & L2        & $0.18$ & \best{0.36} & $0.38$ & \best{0.42} & $0.28$ & \best{0.59} & $0.49$
    \end{tabular}
    \caption{Image completion (top), super-resolution (middle) and colorization (bottom),  results.  $\mathbf{\|M\|}$ indicates average mask size; $\mathbf{C(M, D)}$ the correlation between predicted mask size and the amount of distortion; and $\mathbf{C(M, M_{opt})}$ the correlation between the size of the calibrated mask and the size of the optimal mask.  Arrows indicate which direction is better. Best results shown in \textcolor{blue}{\textbf{blue}}.
    \edit{95\% confidence intervals were calculated for all the experiments and are omitted due to lack of space. All presented results were verified for their statistical significance. }}
    \label{tab:colorization-results}
\end{table}

\section{Conclusions}
Uncertainty assessment in image-to-image regression problems is a challenging task,  due to the implied complexity, the high dimensions involved, and the need to offer an effective and meaningful visualization of the estimated results. This work proposes a novel approach towards these challenges by constructing a continuous mask that highlights the trust-worthy regions in the estimated image. This mask provides a measure of uncertainty accompanied by an accuracy guarantee, stating that with high probability, the distance between the original and the estimated images over the non-masked regions is below a user-specified value. The presented paradigm is flexible, being 
agnostic to the choice of distance function used and the regression method employed, while yielding masks of small area. 



\newpage
\bibliography{iclr2023_conference}
\bibliographystyle{iclr2023_conference}

\newpage
\appendix
\section{Task Illustrations}
\begin{figure}[h]
    \centering
    \begin{tabular}{c c c}
        \includegraphics[width=.25\textwidth]{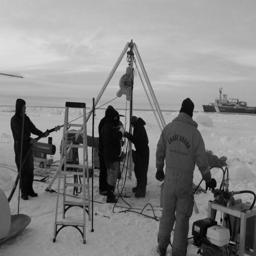}
        & \includegraphics[width=.25\textwidth]{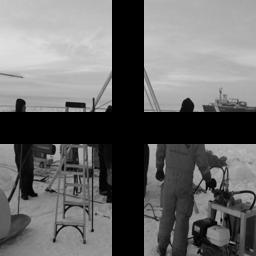}
        & \includegraphics[width=.25\textwidth]{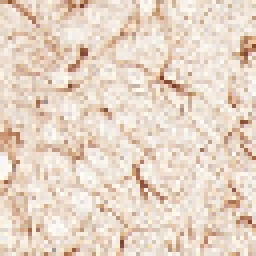}
        \\
        \includegraphics[width=.25\textwidth]{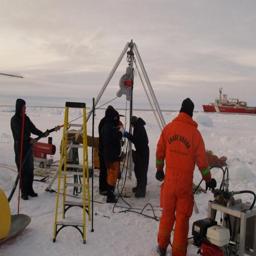}
        & \includegraphics[width=.25\textwidth]{imgs/color.jpg}
        & \includegraphics[width=.25\textwidth]{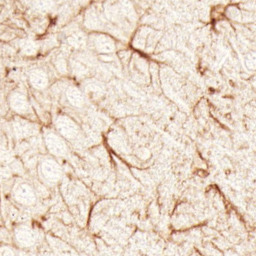}
    \end{tabular}

    \caption{\label{fig:tasks}
    The three tasks we experimented with: 1) Image Colorization on the left column 2) Gray-scale Image Completion on the middle column, and 3) Super Resolution on the right column.}
\end{figure}


\section{Proof of Theorem \ref{thm:optimal}}
\label{apdx:proof}

Recall that we have defined the following optimization problem as defining the optimal mask:
\begin{align}
\tag{P1}
\begin{split}
    \max_m\; \norm{m}_1 \quad \text{subject to} \quad & \mathbb{E}_{y|x}[d_m(y, \hat{y})] \le \alpha, \\
                                                 & \vv{0} \le m\le \vv{1}.
\end{split}
\end{align}

{\bf Theorem 1:} Consider Problem (\ref{eq:mask}) with the distortion measure $d(y,\hat y)=\norm{y-\hat y}_p^p$ where $p>1$. Then, for sufficiently small $\alpha$, the optimal mask $m^*$ admits a closed-form solution given by
\begin{equation*}
    m_{(i)}^*=\alpha^\frac{1}{p}\frac{ q_{(i)}}{\big(\sum_{j=1}^n q_{(j)}\big)^\frac{1}{p}}.
\end{equation*}
where for all $j\in[n]$
\begin{equation*}
    q_{(j)} \triangleq \frac{1}{\left(  \mathbb{E}_{y|x}[|\hat{y}_{(j)} - y_{(j)}|^p] \right)^{1/(p-1)}}.
\end{equation*}

\begin{proof}
 We start with reformulating Problem (\ref{eq:mask}) as follows:
\begin{align*}
\begin{split}
    \min_m\; -\norm{m}_1 \quad \text{subject to} \quad & \mathbb{E}_{y|x}[d_m(y, \hat{y})] \le \alpha, \\
                                                 & m_{(i)} \in [0, 1].
\end{split}
\end{align*}
Next, we define the Lagrangian of the above problem,
\begin{equation*}
    L(m,\mu)\triangleq -\norm{m}_1+\mu\Big(\mathbb{E}_{y|x}[\norm{m\odot(y-\hat y)}_p^p] - \alpha\Big),
\end{equation*}
where $\mu>0$ is the dual variable, and we invoke the chosen metric $d(y,\hat y)=\norm{y-\hat y}_p^p$ for $p>1$. According to the Karush–Kuhn–Tucker (KKT) conditions, the optimal solution $(m^*,\mu^*)$ must satisfy
\begin{align*}
    &\frac{\partial L}{\partial m_{(i)}}=0, \quad \forall i=1,...,n, \\
    &\frac{\partial L}{\partial \mu}=0.
\end{align*}
Furthermore, notice that the Lagrangian is separable with respect to the coordinates of $m$, thus, we can write
\begin{align*}
    \frac{\partial L}{\partial m_{(i)}}&=\frac{\partial }{\partial m_{(i)}}\Big(-|m_{(i)}|+\mu\big( \mathbb{E}_{y|x}[|m_{(i)}\cdot(y_{(i)}-\hat y_{(i)})|^p] -\alpha\big)\Big) \\
    &=\frac{\partial }{\partial m_{(i)}}\Big(-|m_{(i)}|+\mu |m_{(i)}|^p\cdot\mathbb{E}_{y|x}[|y_{(i)}-\hat y_{(i)}|^p]\Big) \\
    &=\frac{\partial }{\partial m_{(i)}}\Big(-m_{(i)}+\mu m_{(i)}^p\cdot\mathbb{E}_{y|x}[|y_{(i)}-\hat y_{(i)}|^p]\Big) \\
    &=-1+\mu p \mathbb{E}_{y|x}[|y_{(i)}-\hat y_{(i)}|^p]\cdot m_{(i)}^{p-1}=0.
\end{align*}
In the above we assume that 
$m_{(i)}\ge 0$, a fact that should be verified once a solution is formed. Hence, we obtain
\begin{equation*}
    m_{(i)}^{p-1}=\Big(\mu p \mathbb{E}_{y|x}[|y_{(i)}-\hat y_{(i)}|^p]\Big)^{-1}.
    \end{equation*}
To simplify our derivation, we define
\begin{align*}
    & d_{(i)}\triangleq \mathbb{E}_{y|x}[|y_{(i)}-\hat y_{(i)}|^p] \\
    & q_{(i)}\triangleq d_{(i)}^{-1/(p-1)}.
\end{align*}
Thus, we can rewrite the above result equivalently as
\begin{align*}
     m_{(i)}^{p-1}=(\mu p d_{(i)}) ^{-1}\text{ or } m_{(i)}=(\mu p) ^{-1/(p-1)}\cdot q_{(i)}.
\end{align*}
Now, consider the second KKT condition:
\begin{align*}
    \frac{\partial L}{\partial \mu}&=\mathbb{E}_{y|x}[\norm{m\odot(y-\hat y)}_p^p] - \alpha \\
    &= \sum_{j=1}^n \mathbb{E}_{y|x}[|m_{(j)}\cdot(y_{(j)}-\hat y_{(j)})|^p] -\alpha \\ 
    &=\sum_{j=1}^n m_{(j)}^pd_{(j)} -\alpha \\
    &=\sum_{j=1}^n m_{(j)} (\mu p d_{(j)})^{-1}d_{(j)} -\alpha \\
    & = (\mu p)^{-1}\sum_{j=1}^n m_{(j)} - \alpha \\
    & = (\mu p)^{-1}(\mu p) ^{-1/(p-1)}\sum_{j=1}^n q_{(j)} - \alpha=0. \\
    &\Rightarrow \mu p = \bigg(\frac{\alpha}{\sum_{j=1}^n q_{(j)}}\bigg)^{-(p-1)/p}.
\end{align*}
Therefore, the optimal solution $m^*$ is given by
\begin{equation*}
    m_{(i)}^*=\alpha^\frac{1}{p}\frac{ q_{(i)}}{\big(\sum_{j=1}^n q_{(j)}\big)^\frac{1}{p}}.
\end{equation*}
Note that as assumed, the obtained solution $m_{(i)}^*$ is non-negative. Finally, assuming $\alpha$ is sufficiently small such that
\begin{equation*}
    \alpha^\frac{1}{p}\leq \min_i \;\Bigg(\frac{ q_{(i)}}{\big(\sum_{j=1}^n q_{(j)}\big)^\frac{1}{p}}\Bigg)^{-1},
\end{equation*}
the solution satisfies the second constraint, $m_{(i)}^*\leq 1$, which completes the proof.
\end{proof}

We now turn to present a closely related result, referring to the case $d(y,\hat y)=\norm{y-\hat y}_1$. 

{\bf Theorem 2:} Consider Problem (\ref{eq:mask}) with the distortion measure $d(y,\hat y)=\norm{y-\hat y}_1$. Then the optimal mask $m^*$ admits a closed-form solution given by
\begin{equation*}
    m_{(i)}=
    \begin{cases}
    1, &d_{(i)}< d^* \\
    0, &\text{otherwise.}
    \end{cases}
\end{equation*}
where $d_{(i)}\triangleq \mathbb{E}_{y|x}[|y_{(i)}-\hat y_{(i)}|]$ and $d^*$ is defined by 
\begin{equation*}
    \sum_{i} d_{(i)}\cdot \{d_{(i)}\le d^*\} = \alpha.
\end{equation*}

\begin{proof}
 Consider the Lagrangian of the above problem when $d(y,\hat y)=\norm{y-\hat y}_1$: 
\begin{equation*}
    L(m,\mu)\triangleq -\norm{m}_1+\mu\Big(\mathbb{E}_{y|x}[\norm{m\odot(y-\hat y)}_1] - \alpha\Big),
\end{equation*}
where $\mu>0$ is the dual variable. As before, the problem is separable with respect to the coordinates of $m$, leading to the following one-dimensional problem
\begin{equation*}
    \min_{0\leq m(i)\leq 1}\; -m_{(i)}+\mu m_{(i)} d_{(i)}=\min_{0\leq m(i)\leq 1}\; m_{(i)}\Big(\mu d_{(i)}-1\Big),
\end{equation*}
where $d_{(i)}\triangleq \mathbb{E}_{y|x}[|y_{(i)}-\hat y_{(i)}|]$. Thus,
\begin{equation*}
    m_{(i)}=
    \begin{cases}
    1, &d_{(i)}<\frac{1}{\mu} \\
    0, &\text{otherwise.}
    \end{cases}
\end{equation*}
Observe that this result is intuitive in retrospect, as smaller distances $d_{(i)}$ correspond to un-masked areas. In addition, regardless of $\mu$, the choice of the mask value is dictated by the distance, a fact that we leverage hereafter. 
Recall that the solution should satisfy
\begin{align*}
    \sum_{i}^n m_{(i)}d_{(i)} = \alpha, 
\end{align*}
where $m_{(i)}$ are 1-es for the smaller values of $d_{(i)}$. Thus, 
\begin{align*}
    \sum_{i} m_{(i)}d_{(i)}=\sum_{i: d_{(i)}< d^* } d_{(i)}=\alpha.
\end{align*}
Hence, the above provides a definition for the threshold value $d^*$, and $\mu = 1/d^*$, which completes the proof.
\end{proof}

\section{More Results}
\begin{figure}[b]
    \centering
    \begin{tabular}{c c}
        \includegraphics[width=.5\textwidth]{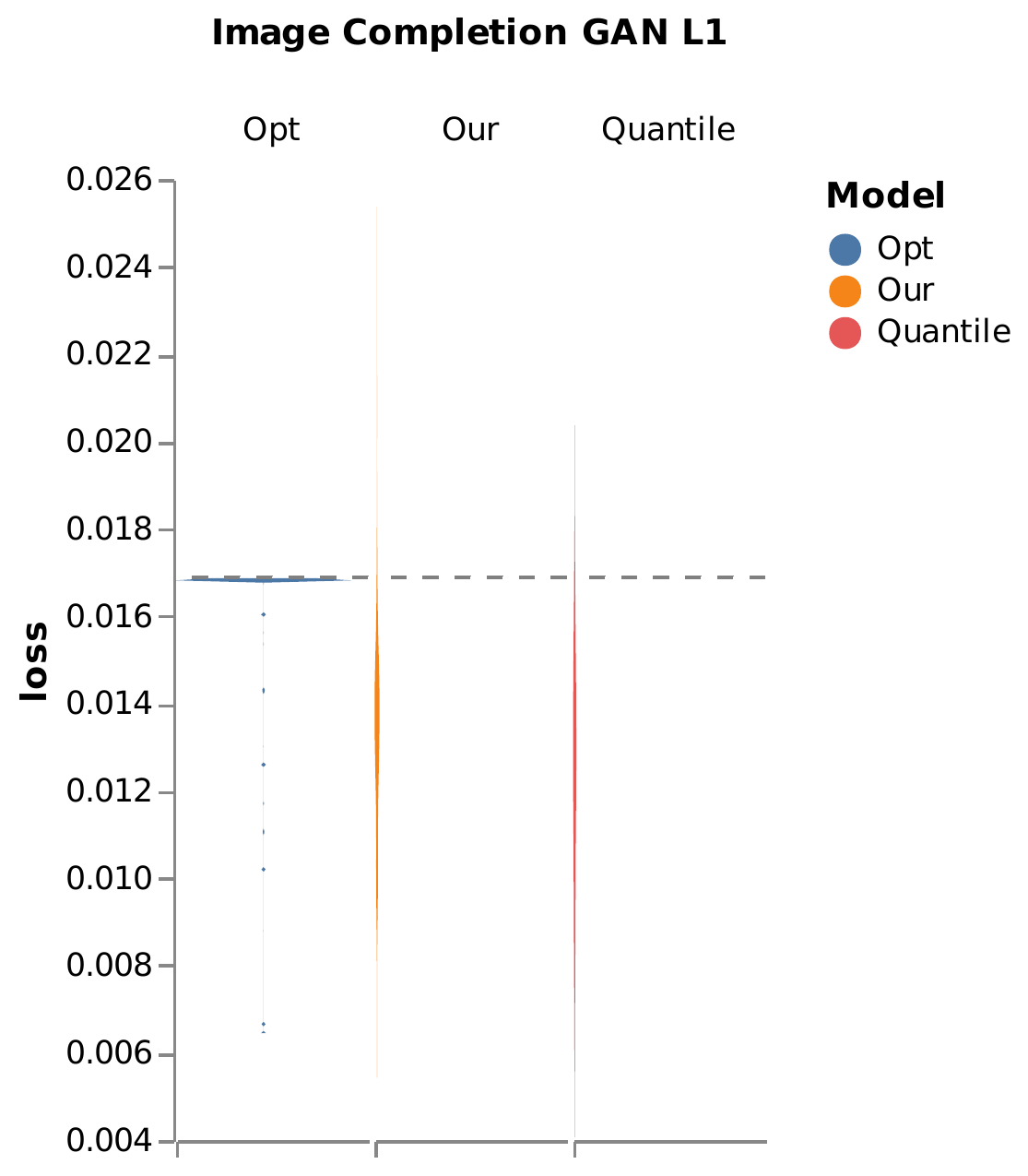}
        & \includegraphics[width=.5\textwidth]{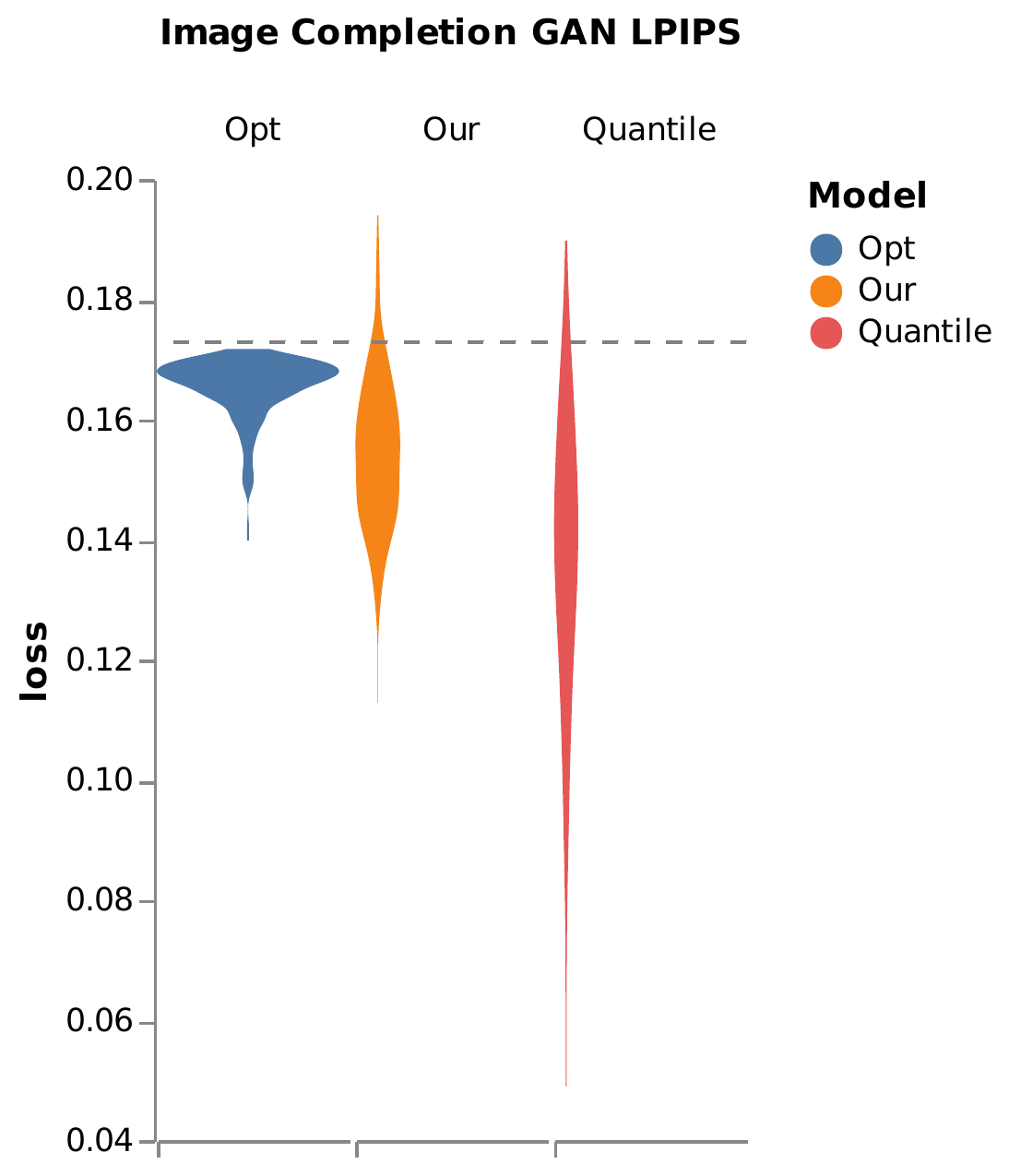} \\
        \includegraphics[width=.5\textwidth]{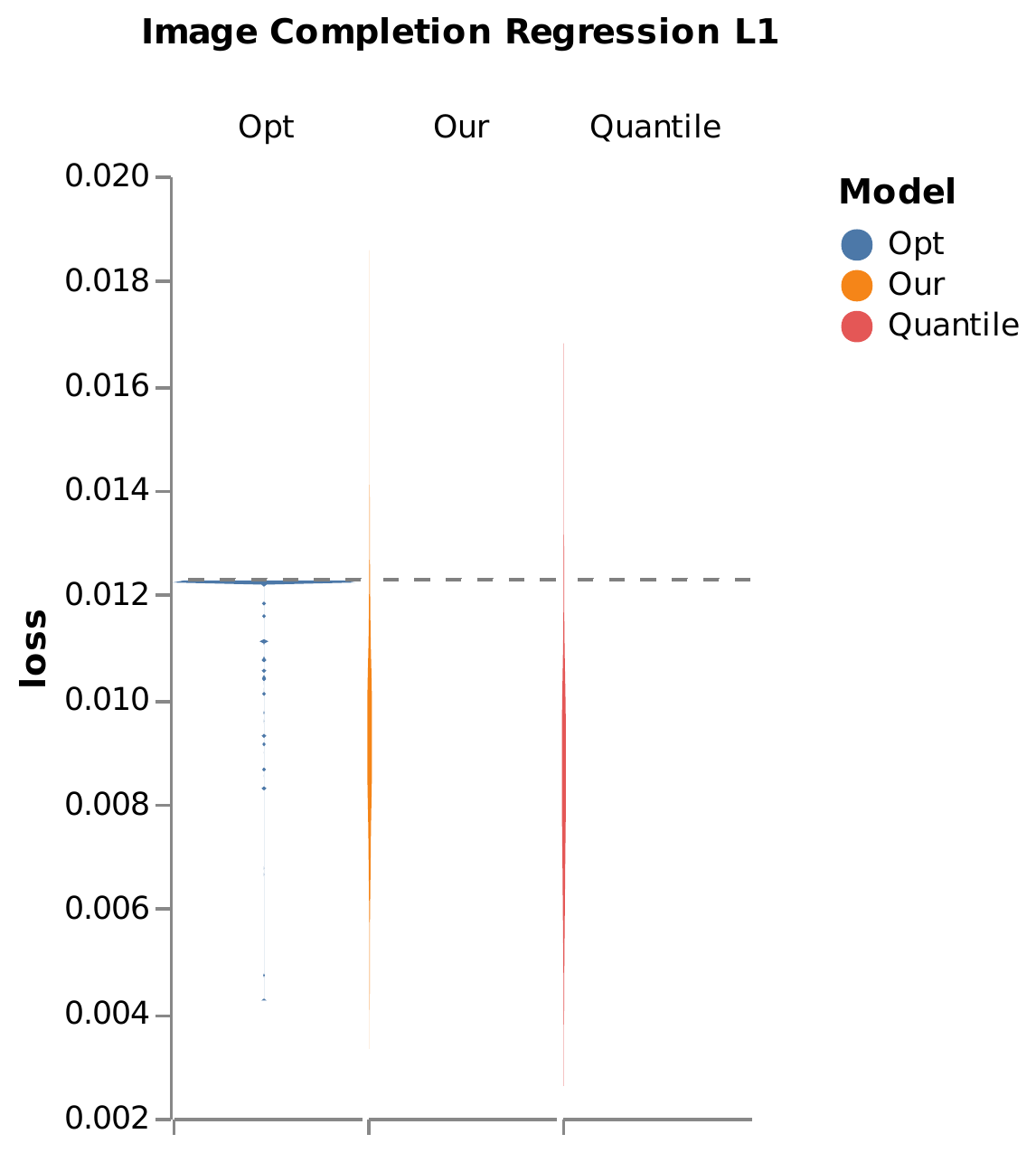}
        & \includegraphics[width=.5\textwidth]{imgs/violins_completion_regressor_lpips.pdf}
    \end{tabular}

    \caption{Image Completion - The distribution of the masked distortion values versus the chosen threshold (shown as a horizontal dashed line) for the three tested methods - Opt, Ours and Quantile.}
    \label{fig:ViolinsCompletion}
\end{figure}

\begin{figure}[b]
    \centering
    \begin{tabular}{c c}
        \includegraphics[width=.5\textwidth]{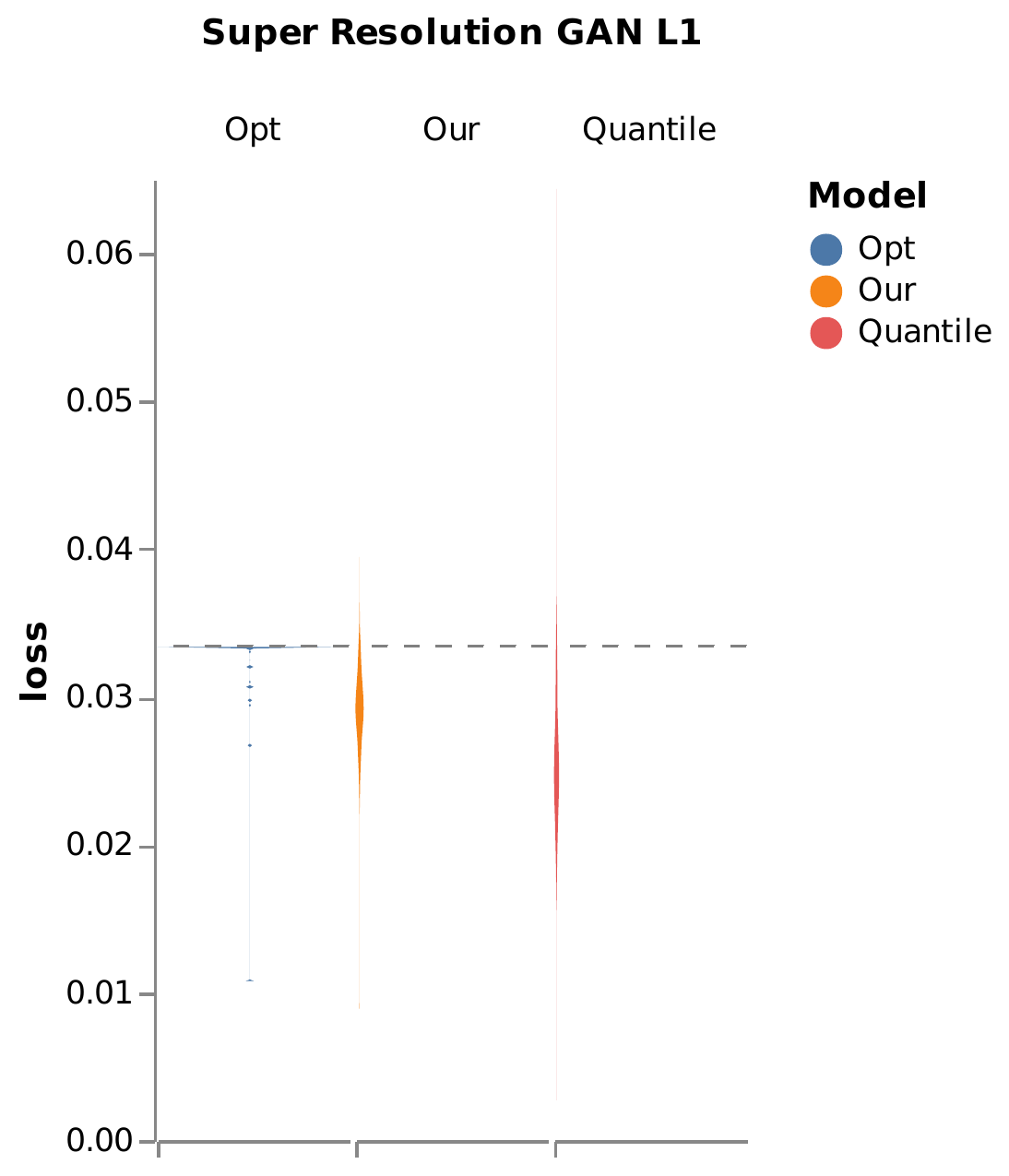}
        & \includegraphics[width=.5\textwidth]{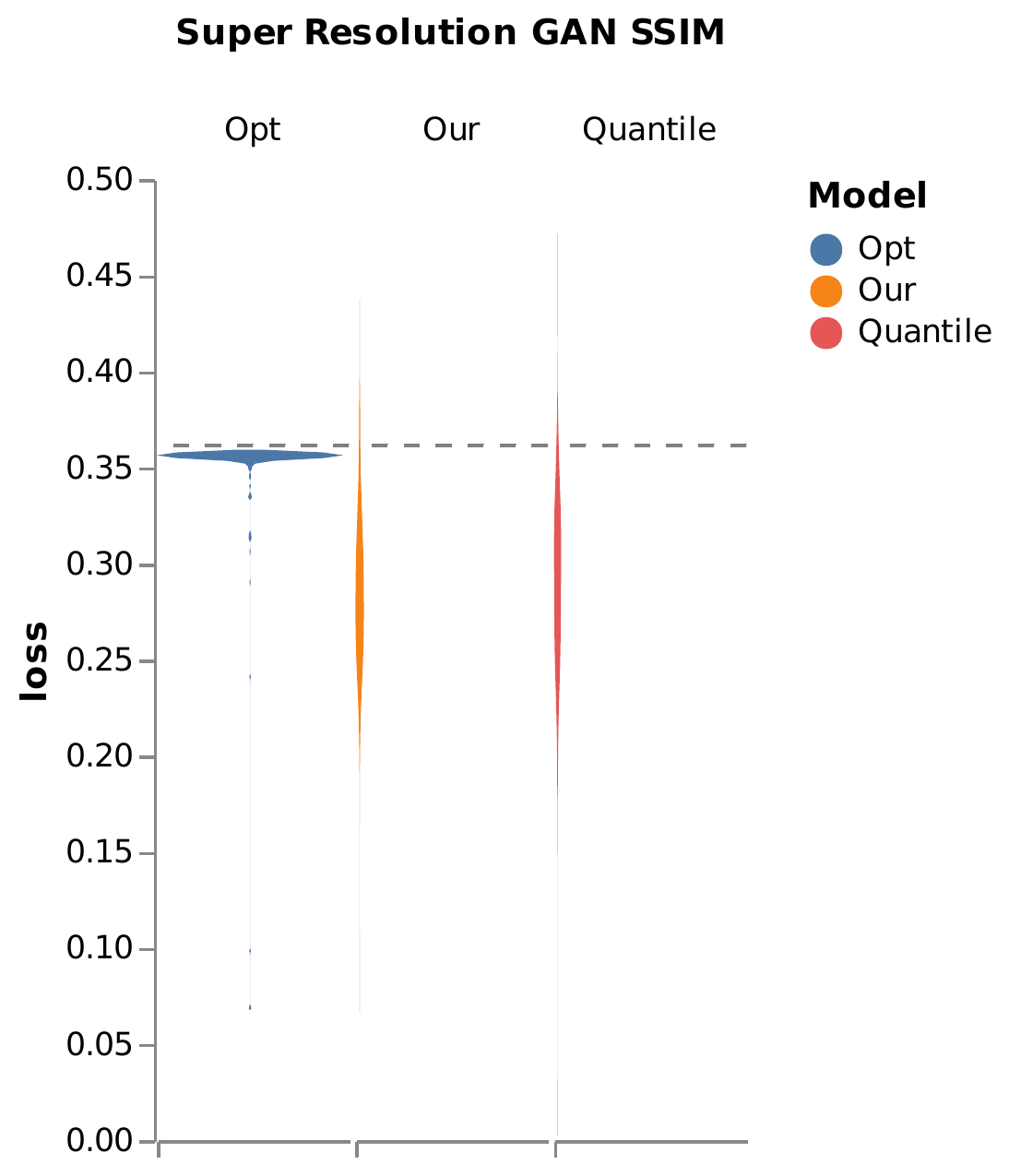} \\
        \includegraphics[width=.5\textwidth]{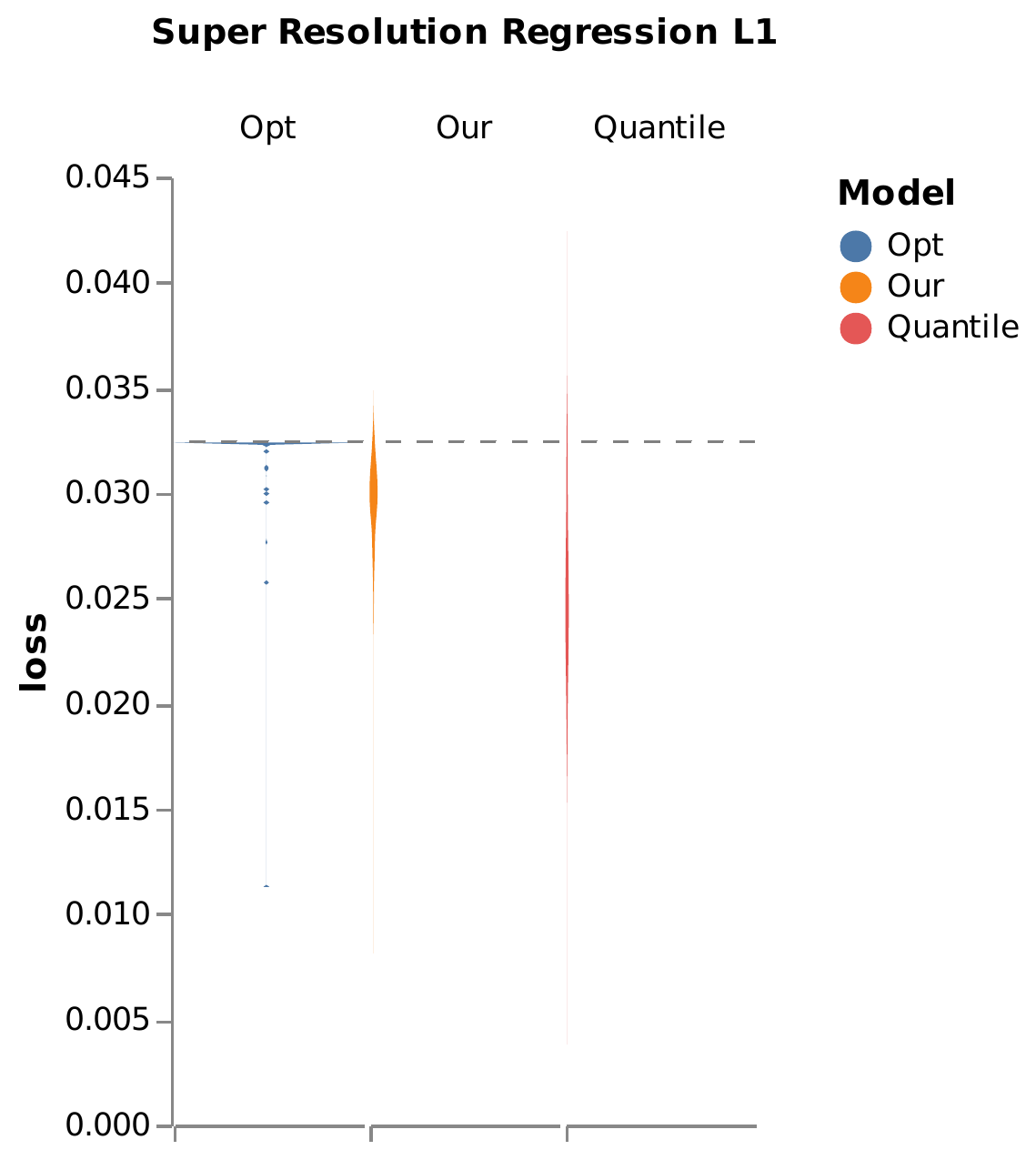}
        & \includegraphics[width=.5\textwidth]{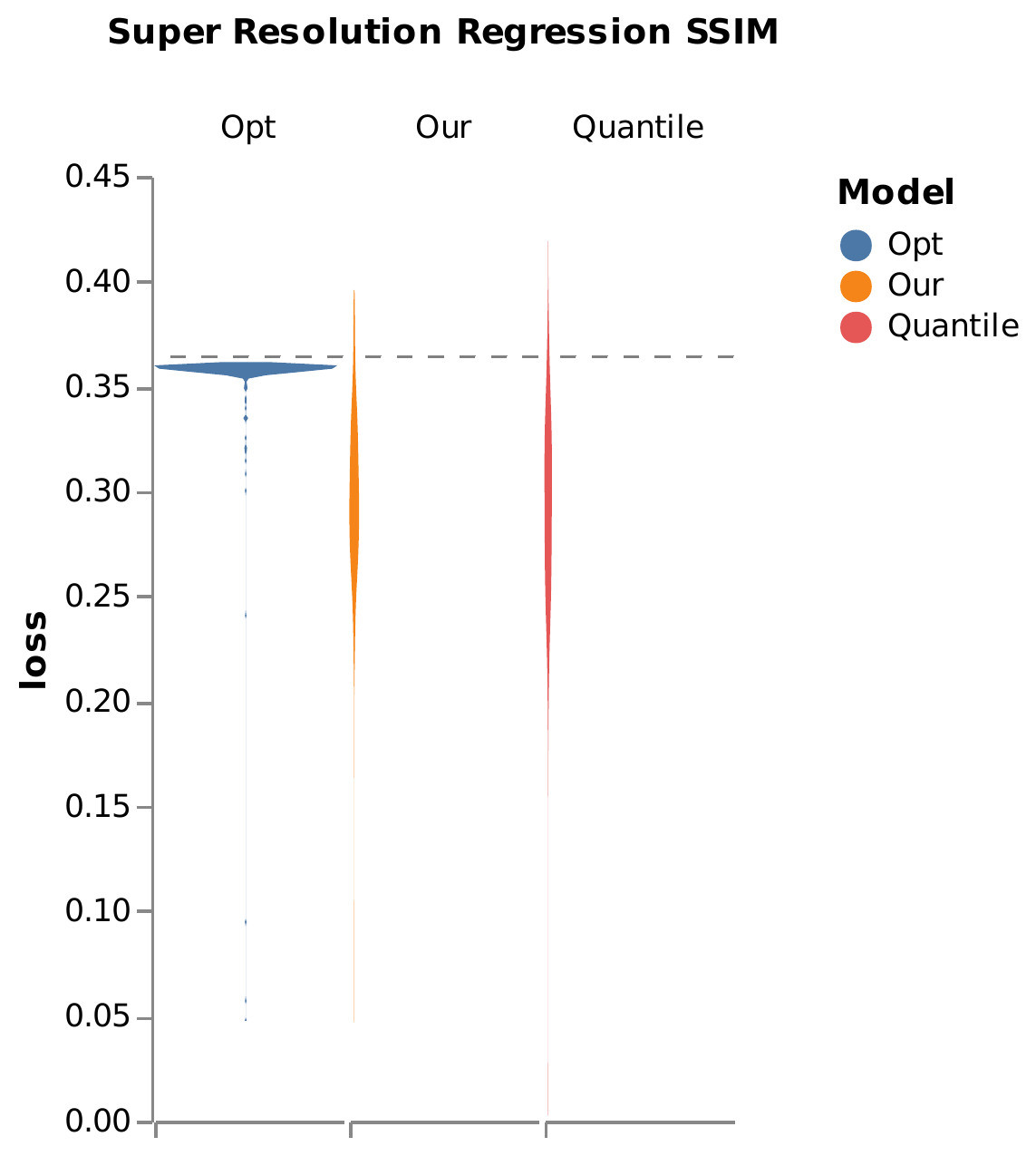}
    \end{tabular}

    \caption{Super Resolution - The distribution of the masked distortion values versus the chosen threshold (shown as a horizontal dashed line) for the three tested methods - Opt, Ours and Quantile.}
    \label{fig:ViolinsSR}
\end{figure}

\begin{figure}[b]
    \centering
    \begin{tabular}{c c}
        \includegraphics[width=.5\textwidth]{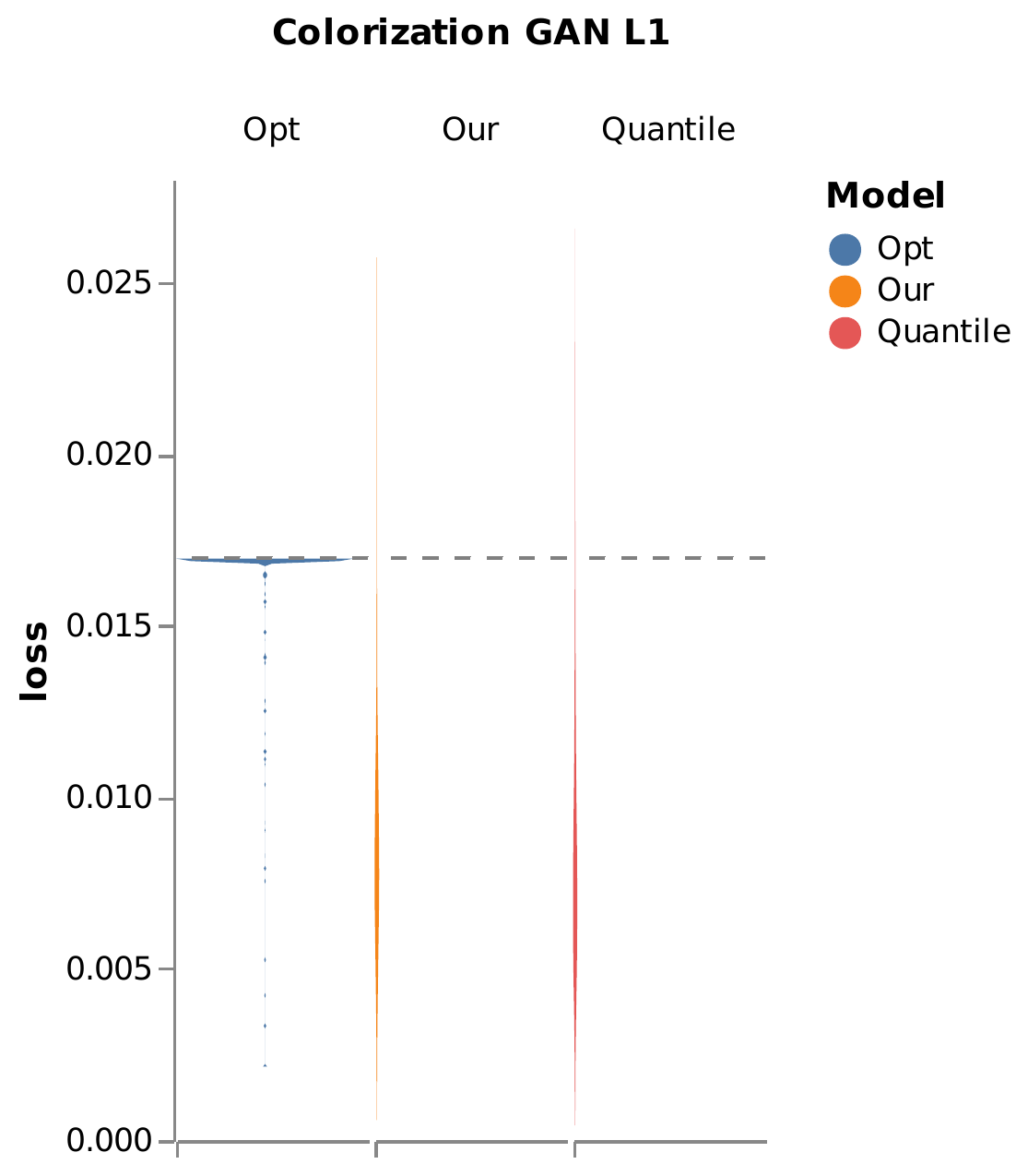}
        & \includegraphics[width=.5\textwidth]{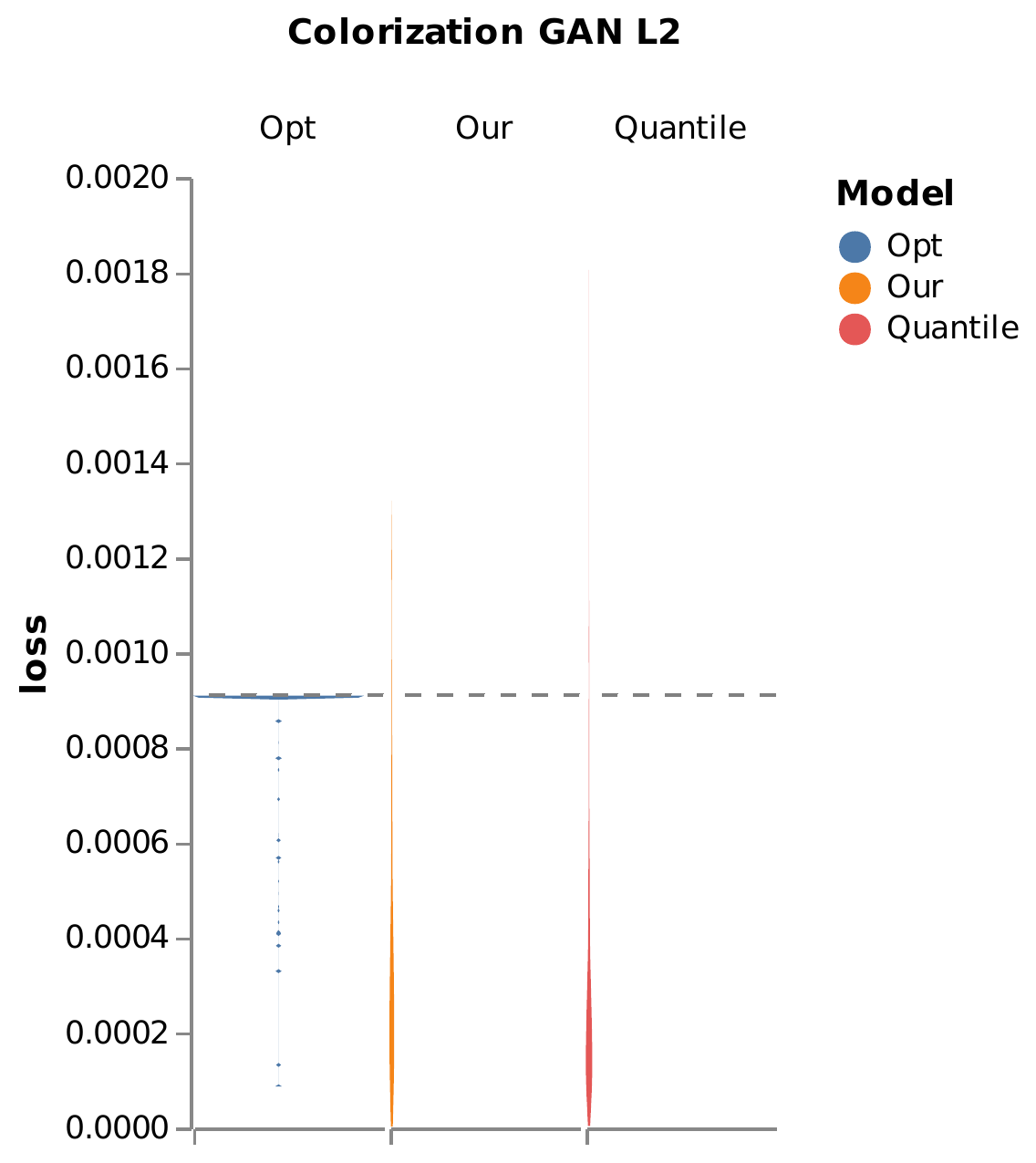} \\
        \includegraphics[width=.5\textwidth]{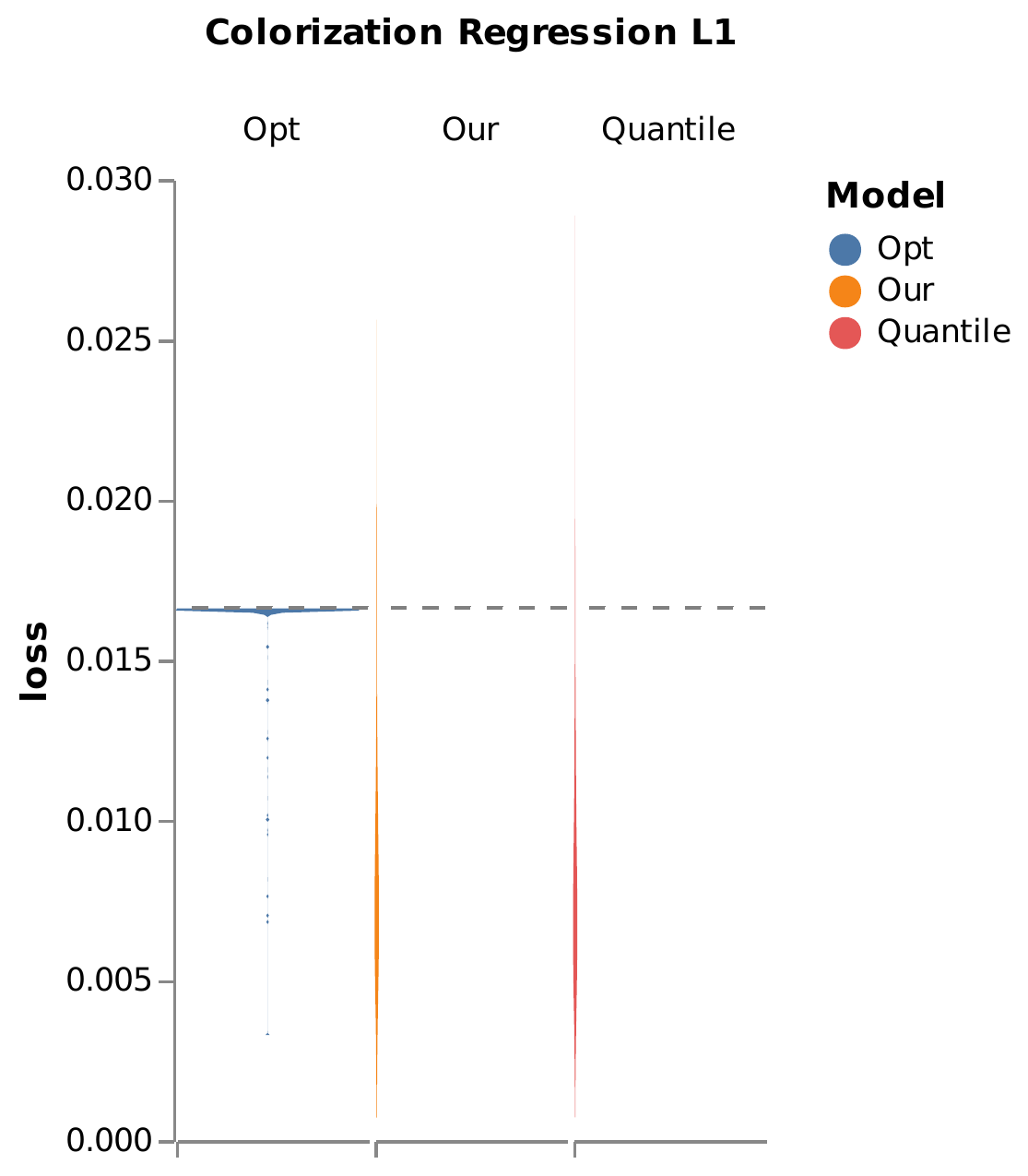}
        & \includegraphics[width=.5\textwidth]{imgs/violins_color_regressor_l2.pdf}
    \end{tabular}

    \caption{Colorization - The distribution of the masked distortion values versus the chosen threshold (shown as a horizontal dashed line) for the three tested methods - Opt, Ours and Quantile.}
    \label{fig:ViolinsColorization}
\end{figure}

\begin{figure}[b]
    \centering
    \begin{tabular}{c c}
        \includegraphics[width=.5\textwidth]{imgs/size_hist_completion_gan_l1.pdf}
        & \includegraphics[width=.5\textwidth]{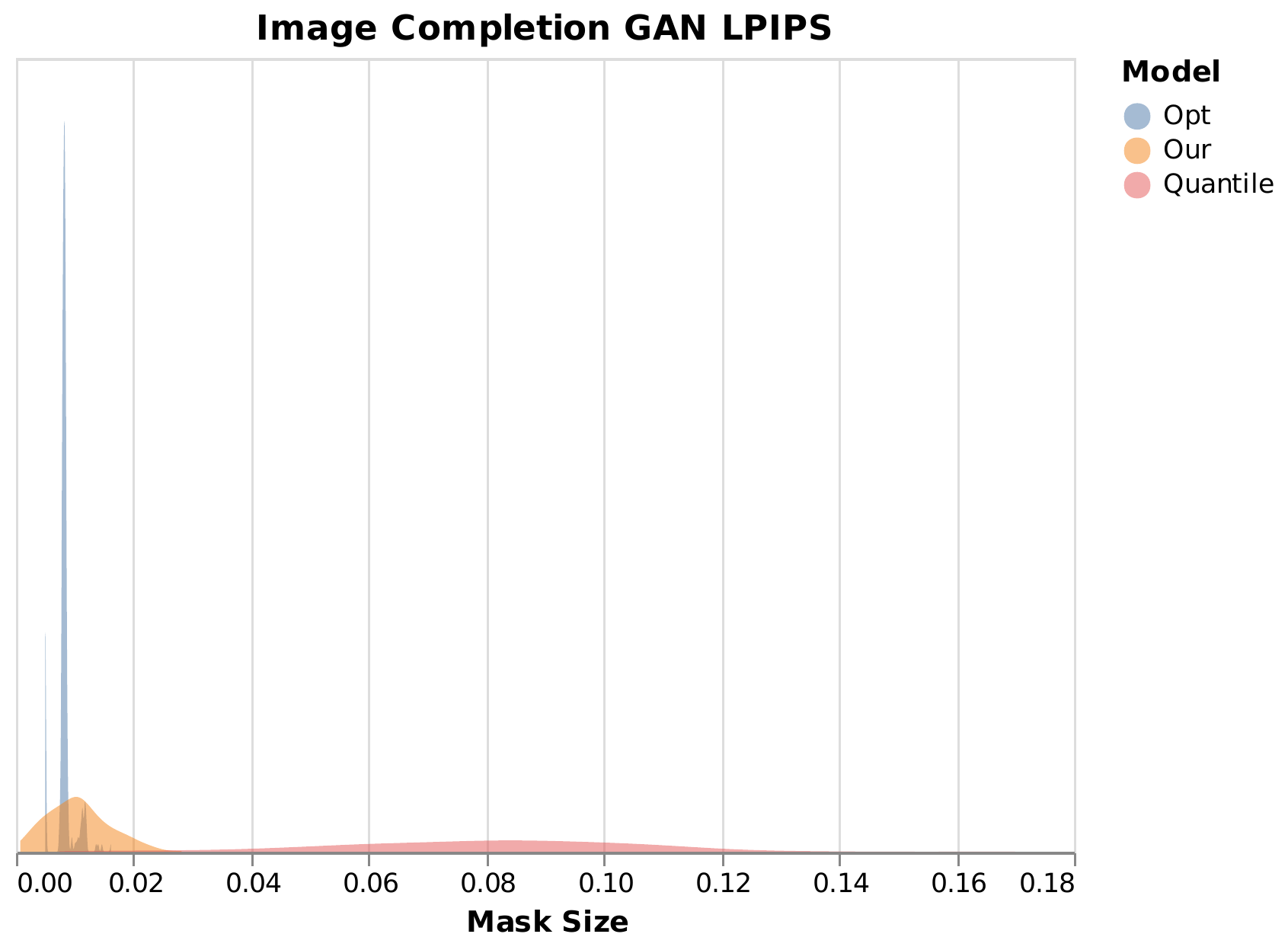} \\
        \includegraphics[width=.5\textwidth]{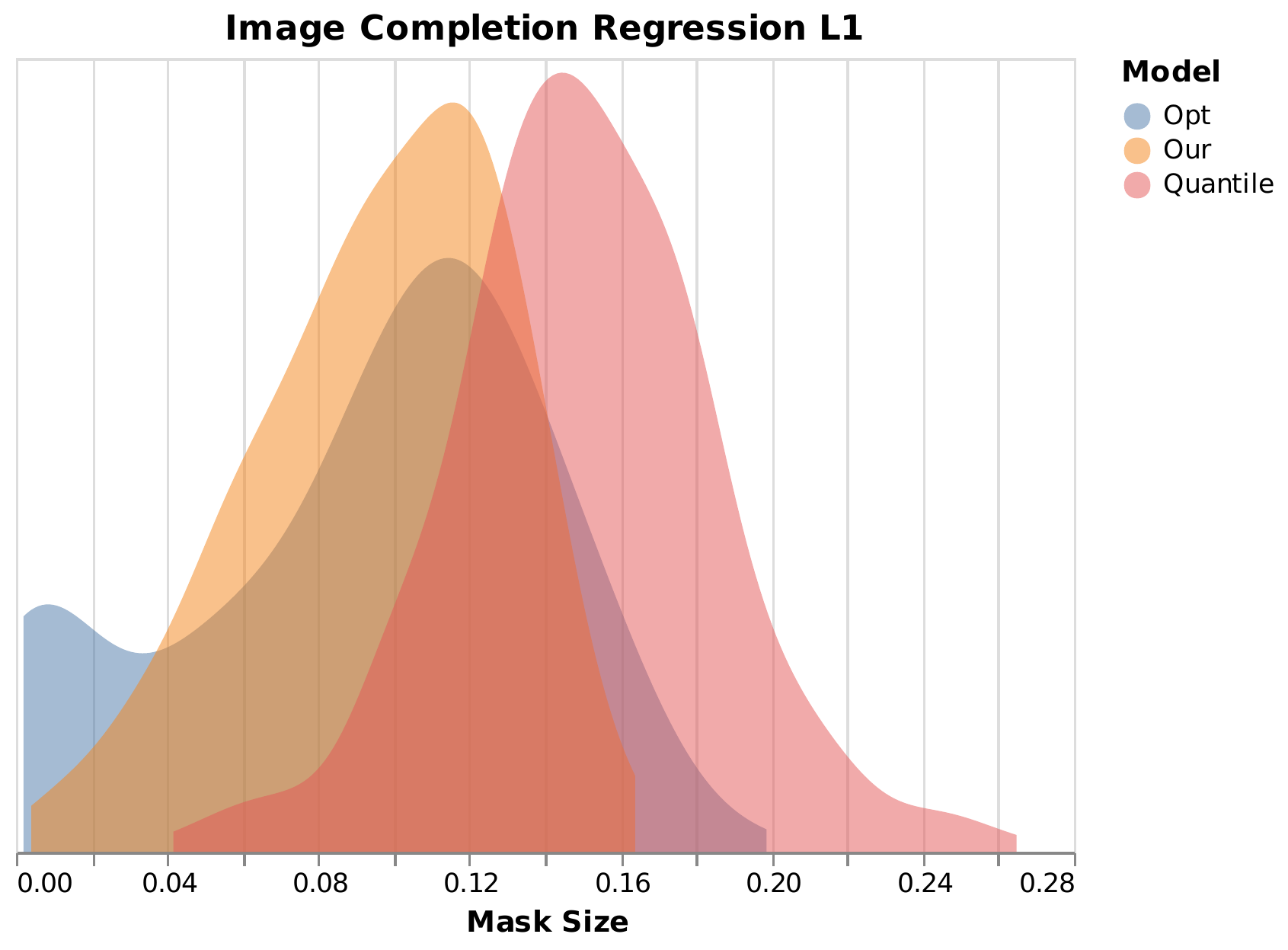}
        & \includegraphics[width=.5\textwidth]{imgs/size_hist_completion_regressor_lpips.pdf}
    \end{tabular}

    \caption{Image Completion - Histograms of the calibrated mask sizes for the three tested methods - Opt, Ours and Quantile.}
    \label{fig:HistogramCompletion}
\end{figure}

\begin{figure}[b]
    \centering
    \begin{tabular}{c c}
        \includegraphics[width=.5\textwidth]{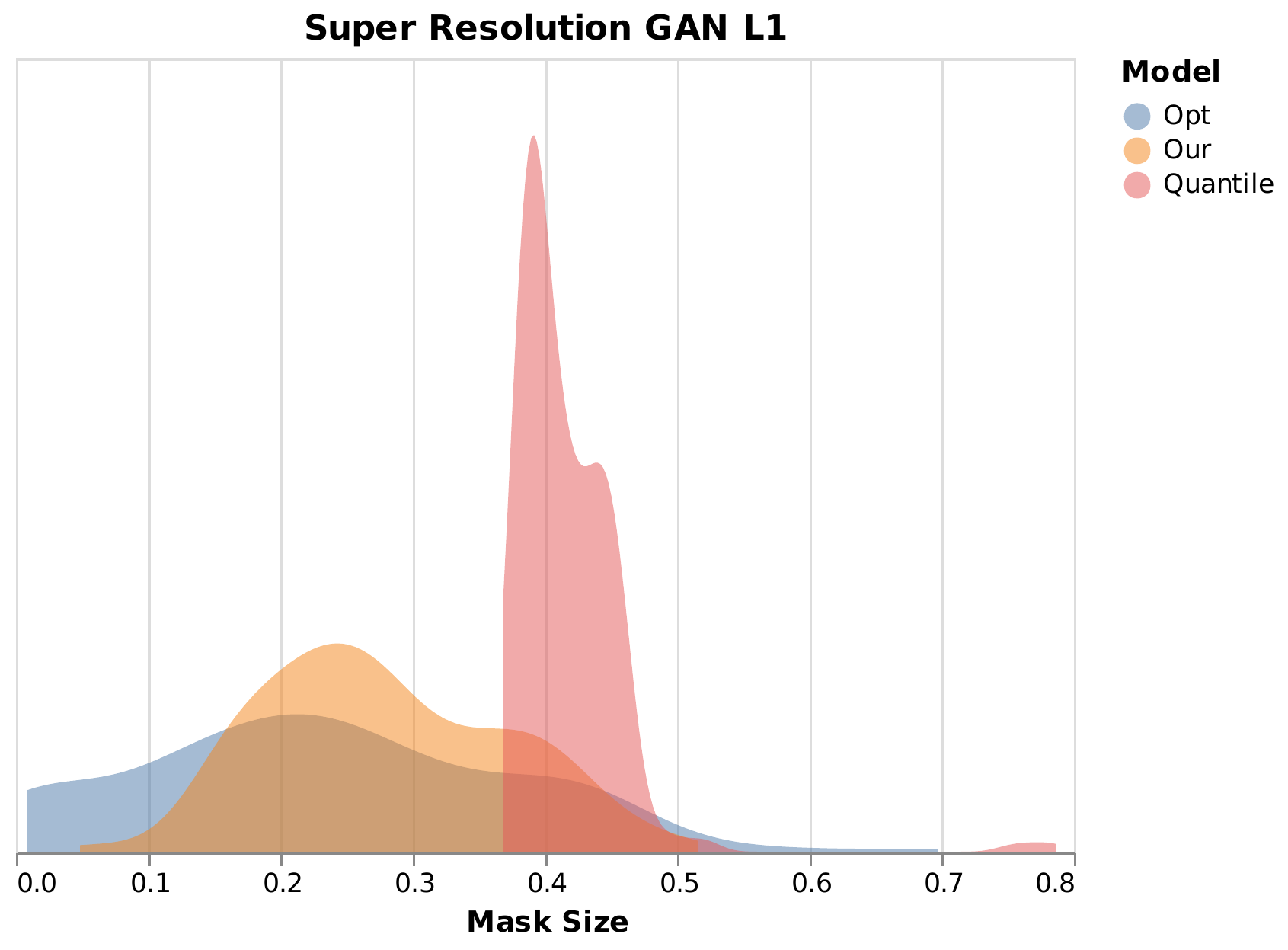}
        & \includegraphics[width=.5\textwidth]{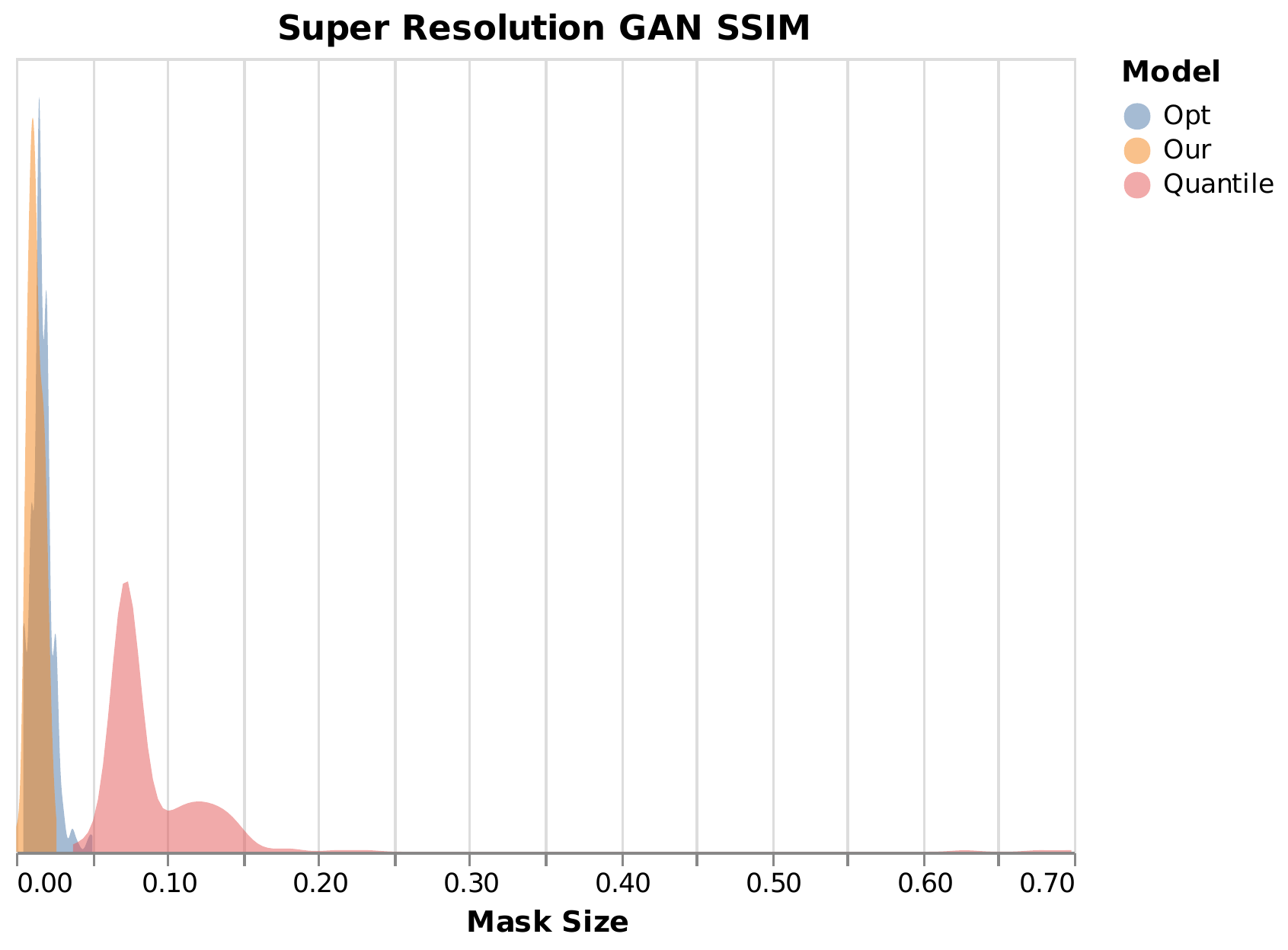} \\
        \includegraphics[width=.5\textwidth]{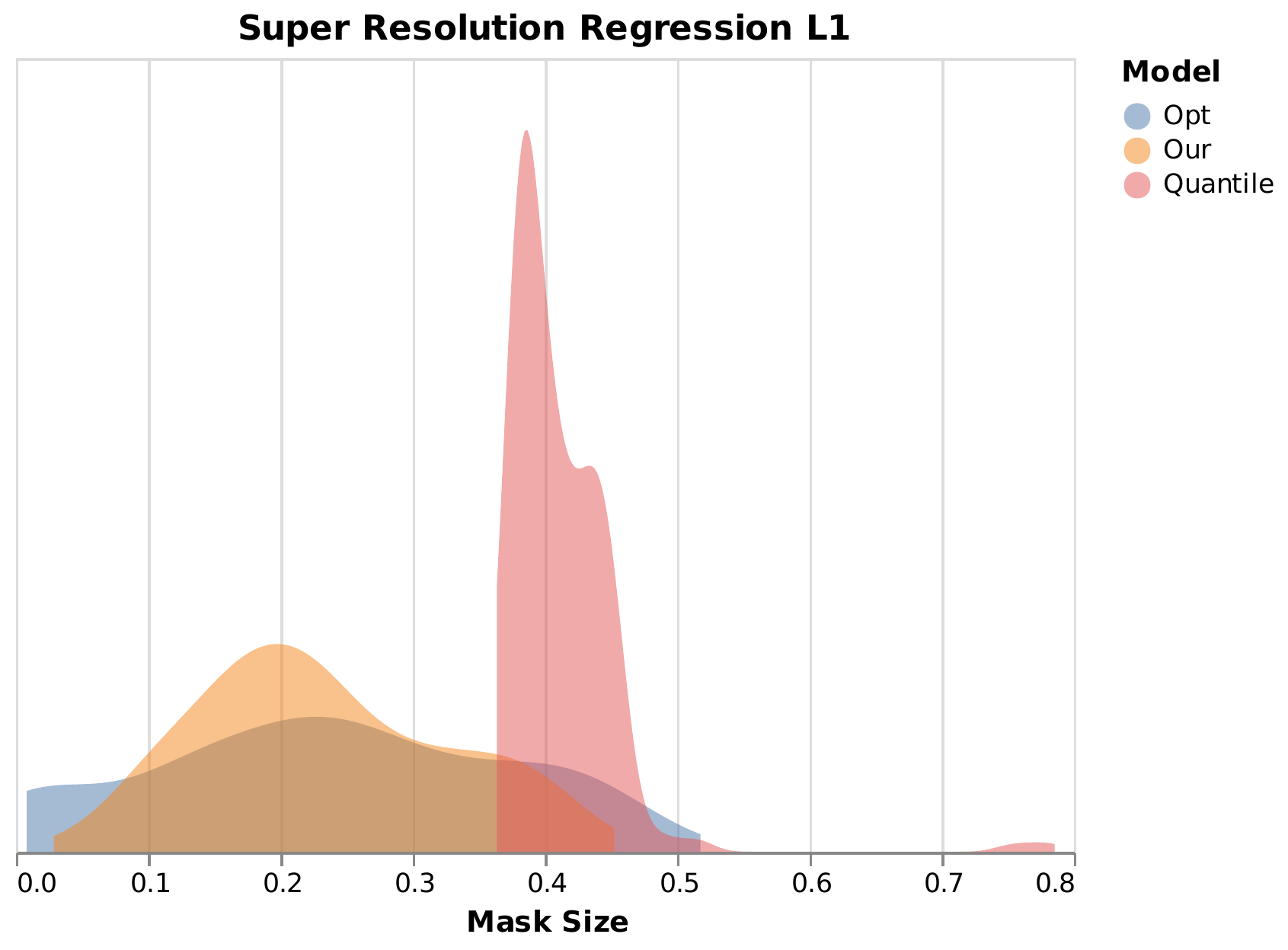}
        & \includegraphics[width=.5\textwidth]{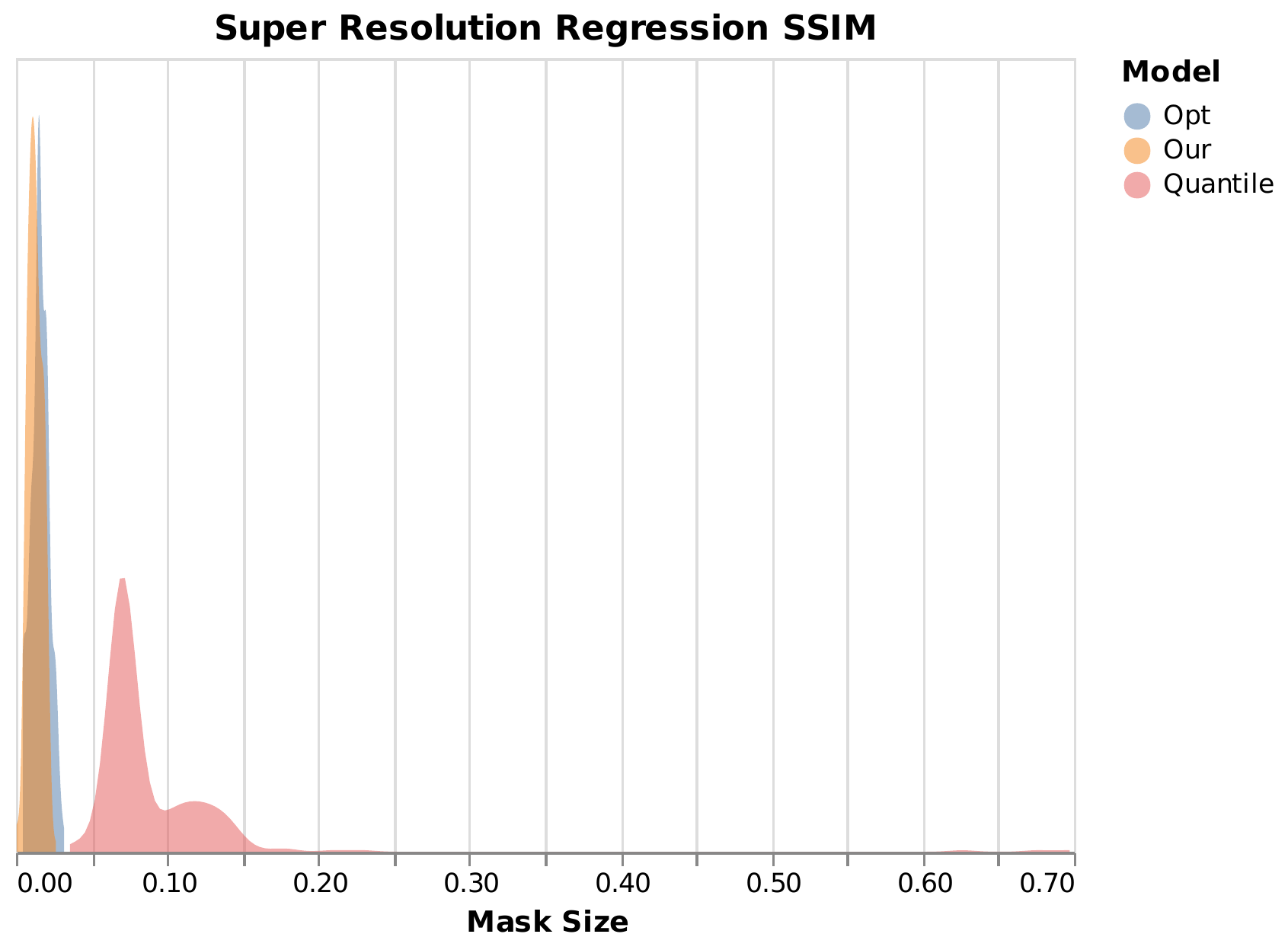}
    \end{tabular}

    \caption{Super Resolution - Histograms of the calibrated mask sizes for the three tested methods - Opt, Ours and Quantile.}
    \label{fig:HistogramSR}
\end{figure}

\begin{figure}[b]
    \centering
    \begin{tabular}{c c}
        \includegraphics[width=.5\textwidth]{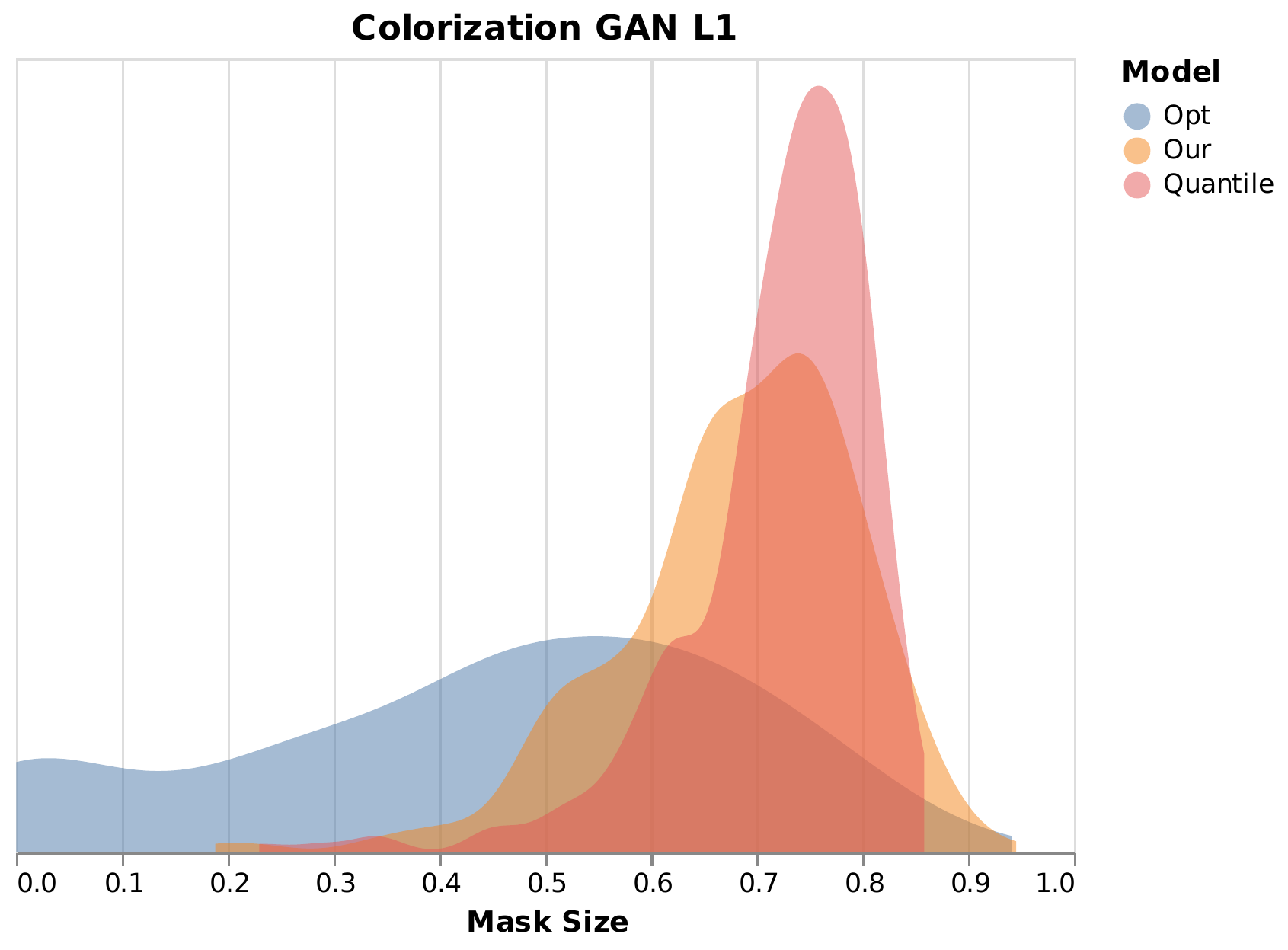}
        & \includegraphics[width=.5\textwidth]{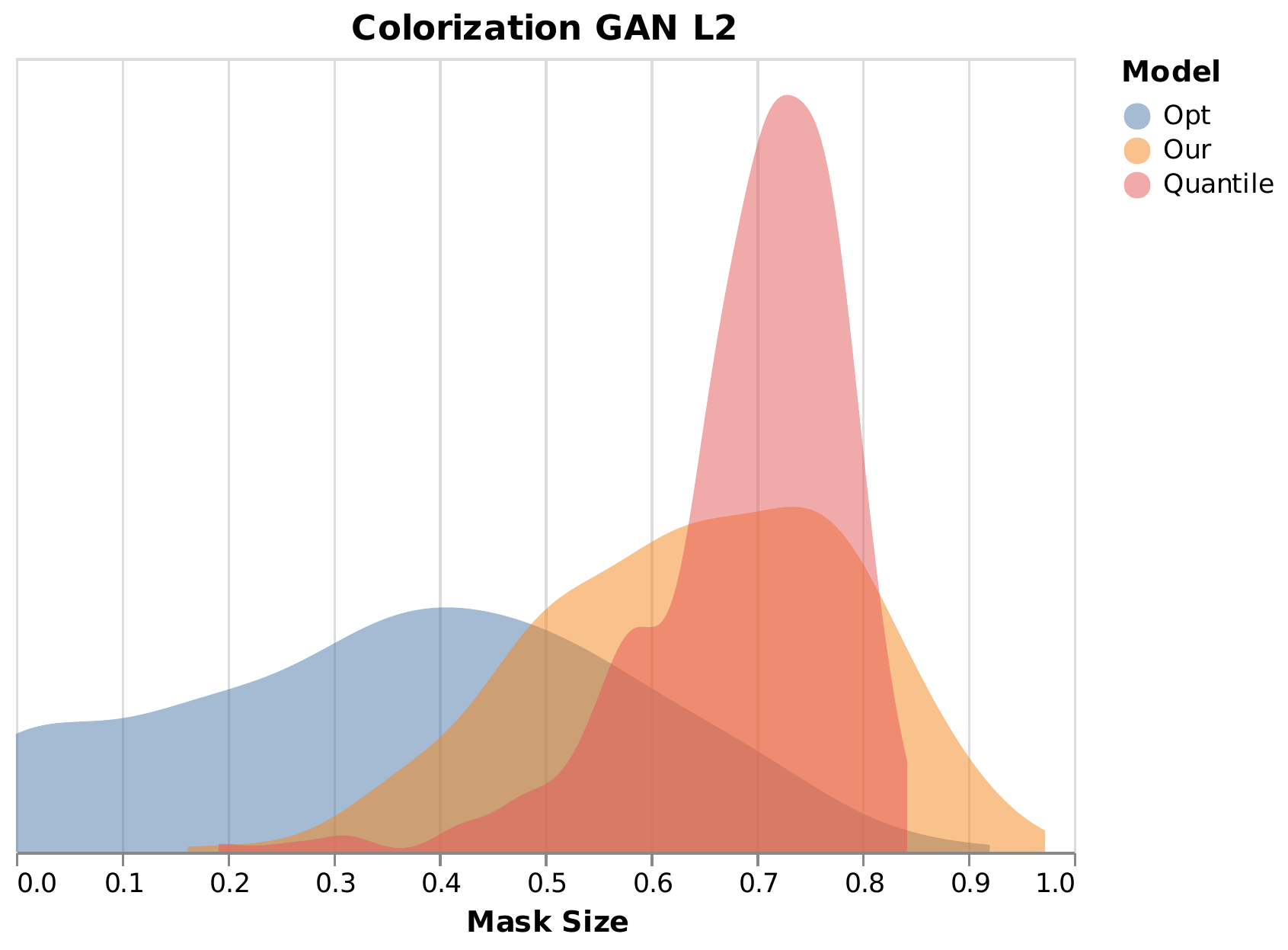} \\
        \includegraphics[width=.5\textwidth]{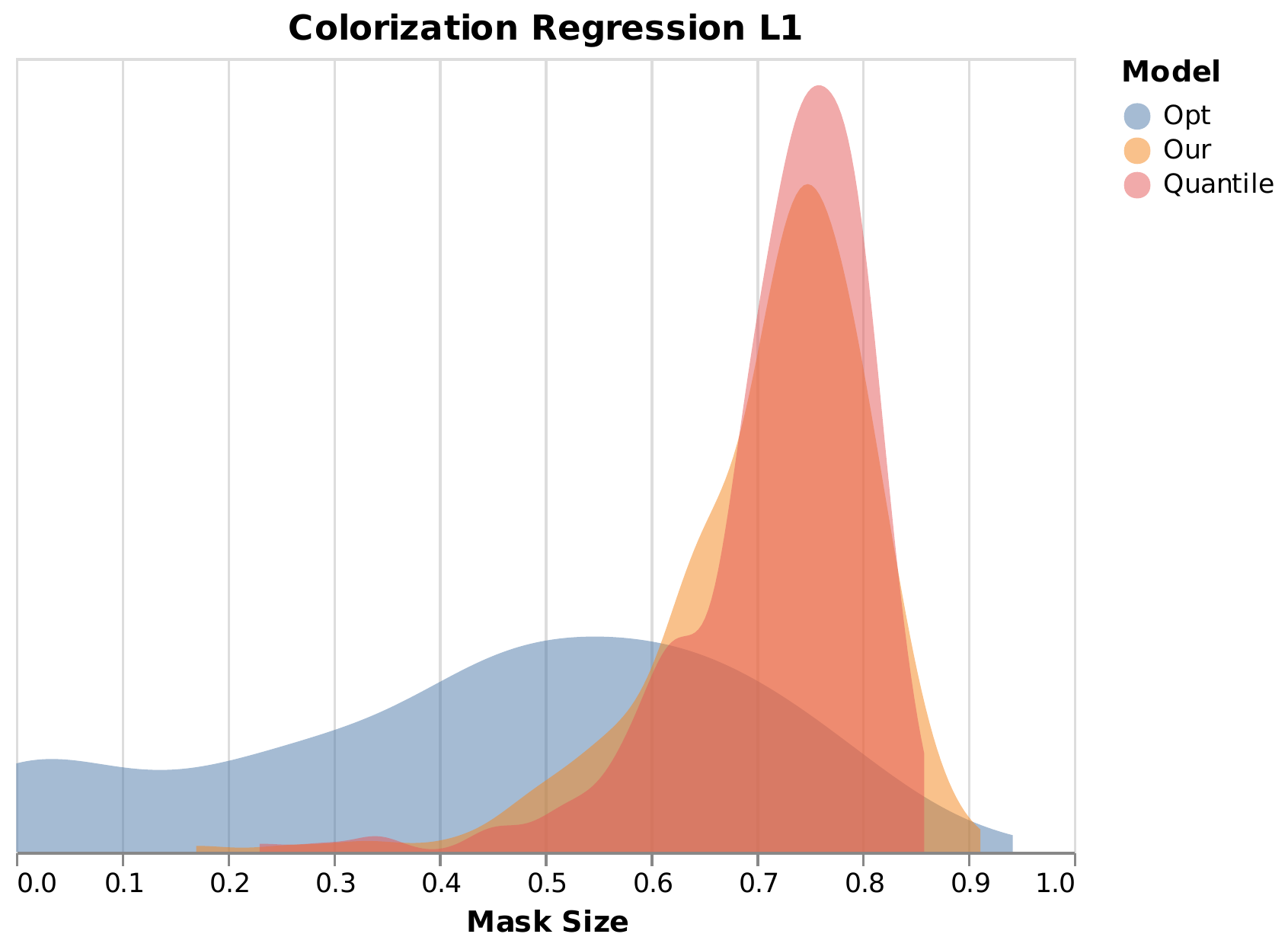}
        & \includegraphics[width=.5\textwidth]{imgs/size_hist_color_regressor_l2.pdf}
    \end{tabular}

    \caption{Colorization - Histograms of the calibrated mask sizes for the three tested methods - Opt, Ours and Quantile.}
    \label{fig:HistogramColorization}
\end{figure}

\begin{figure}[b]
    \centering
    \begin{tabular}{c c}
        \includegraphics[width=.5\textwidth]{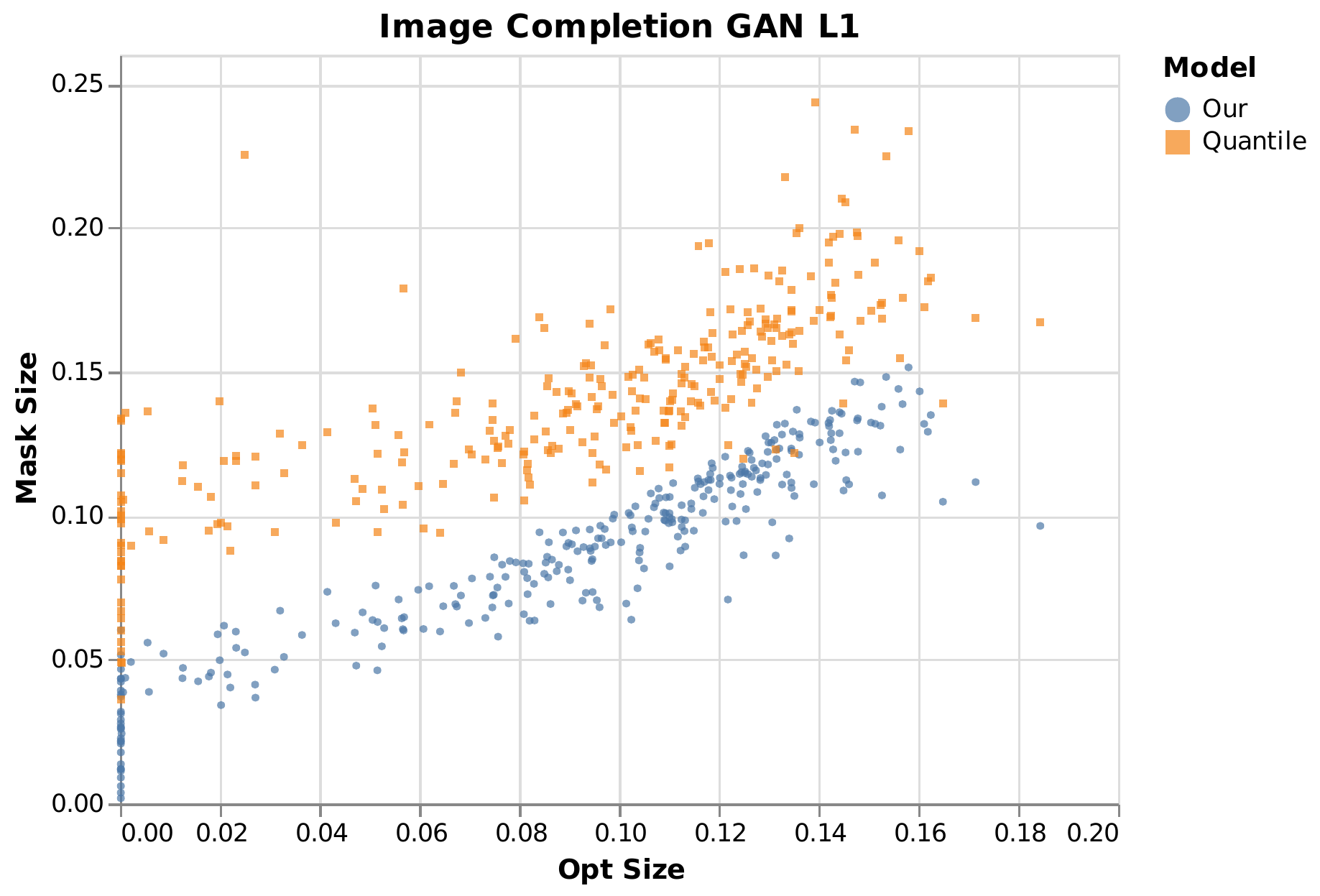}
        & \includegraphics[width=.5\textwidth]{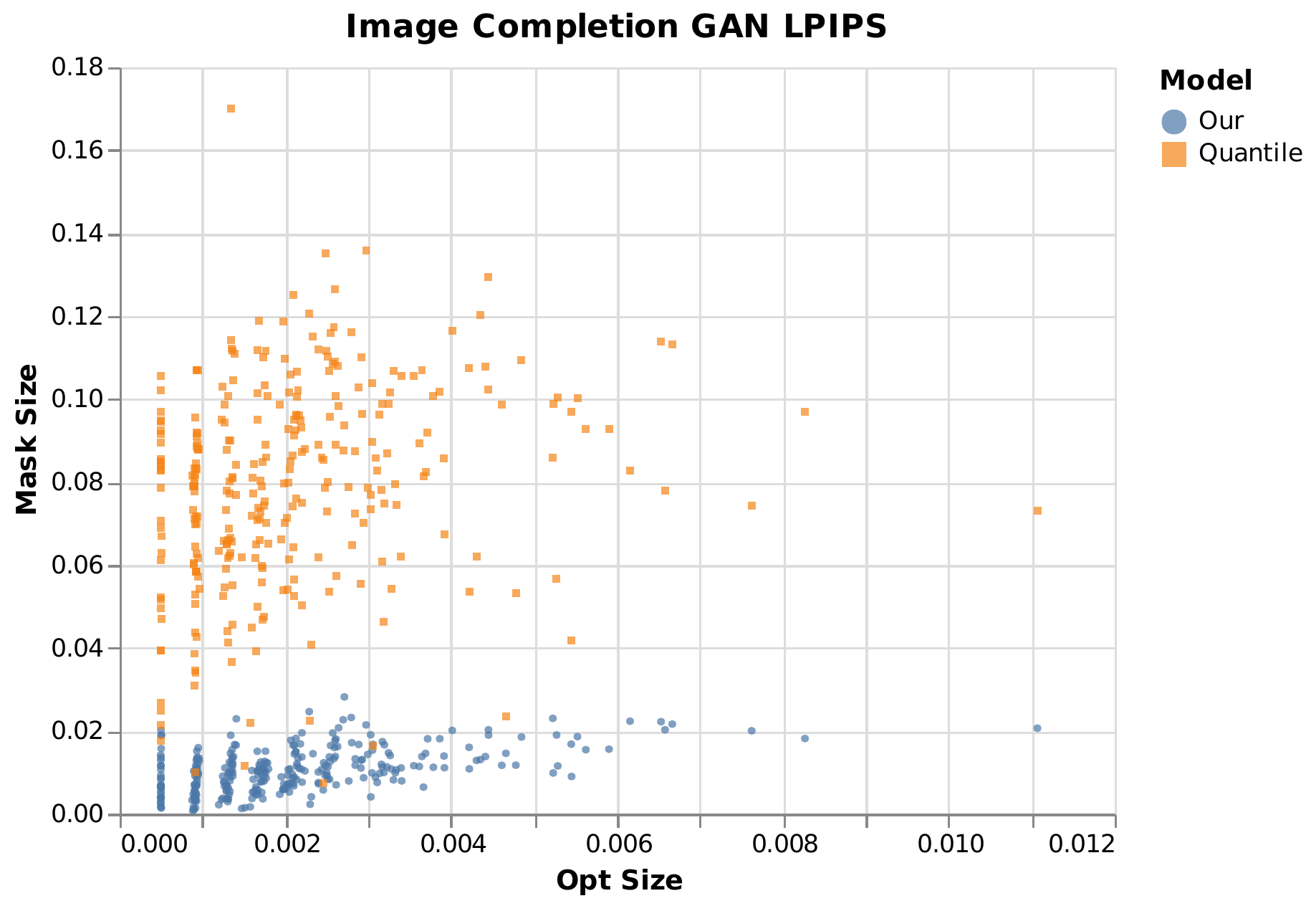} \\
        \includegraphics[width=.5\textwidth]{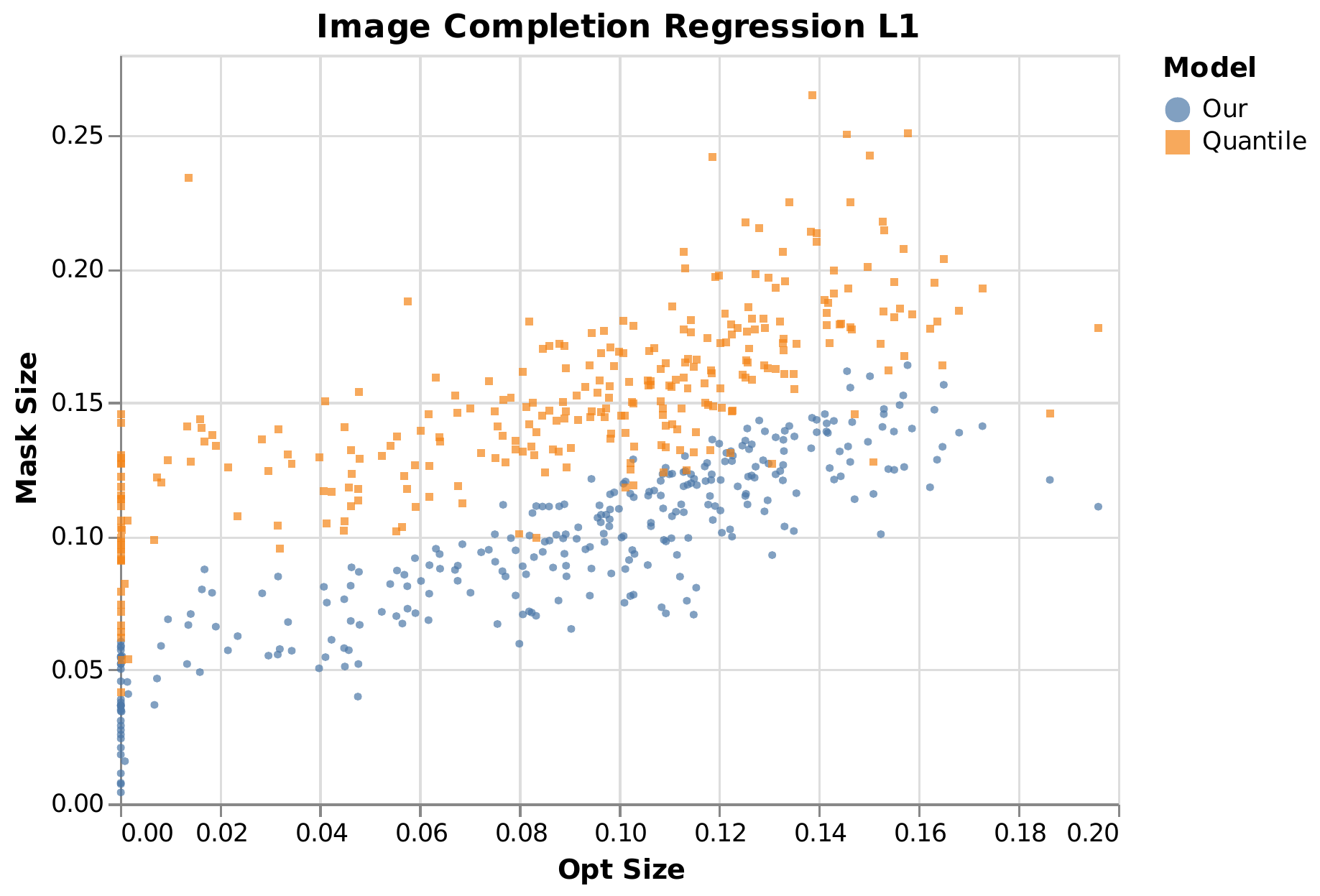}
        & \includegraphics[width=.5\textwidth]{imgs/size_corr_completion_regressor_lpips.pdf}
    \end{tabular}

    \caption{Image Completion - The correlation between the Ours and Quantile mask sizes versus Opt's mask size.}
    \label{fig:CorrCompletion}
\end{figure}

\begin{figure}[b]
    \centering
    \begin{tabular}{c c}
        \includegraphics[width=.5\textwidth]{imgs/size_corr_sr_gan_l1.pdf}
        & \includegraphics[width=.5\textwidth]{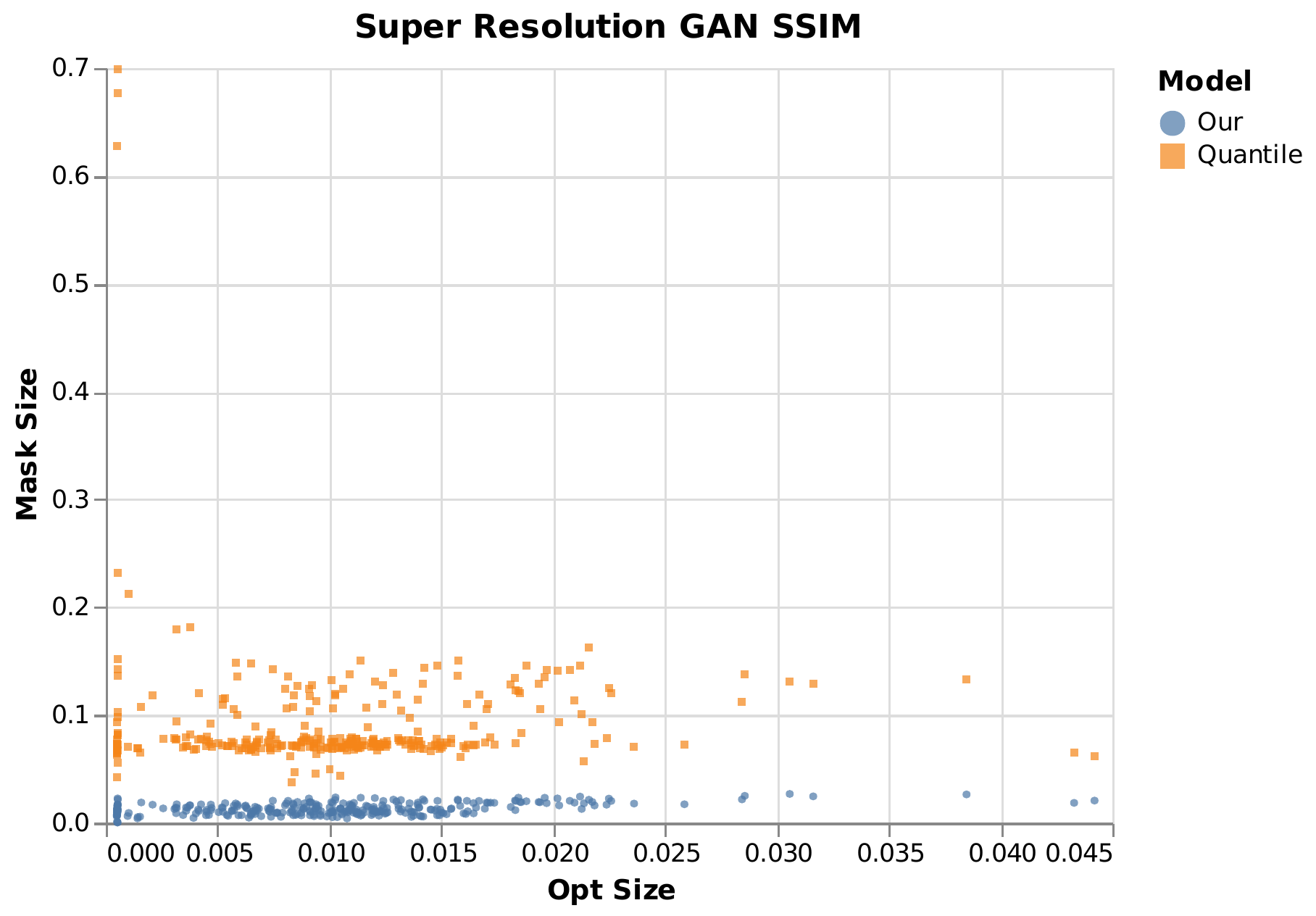} \\
        \includegraphics[width=.5\textwidth]{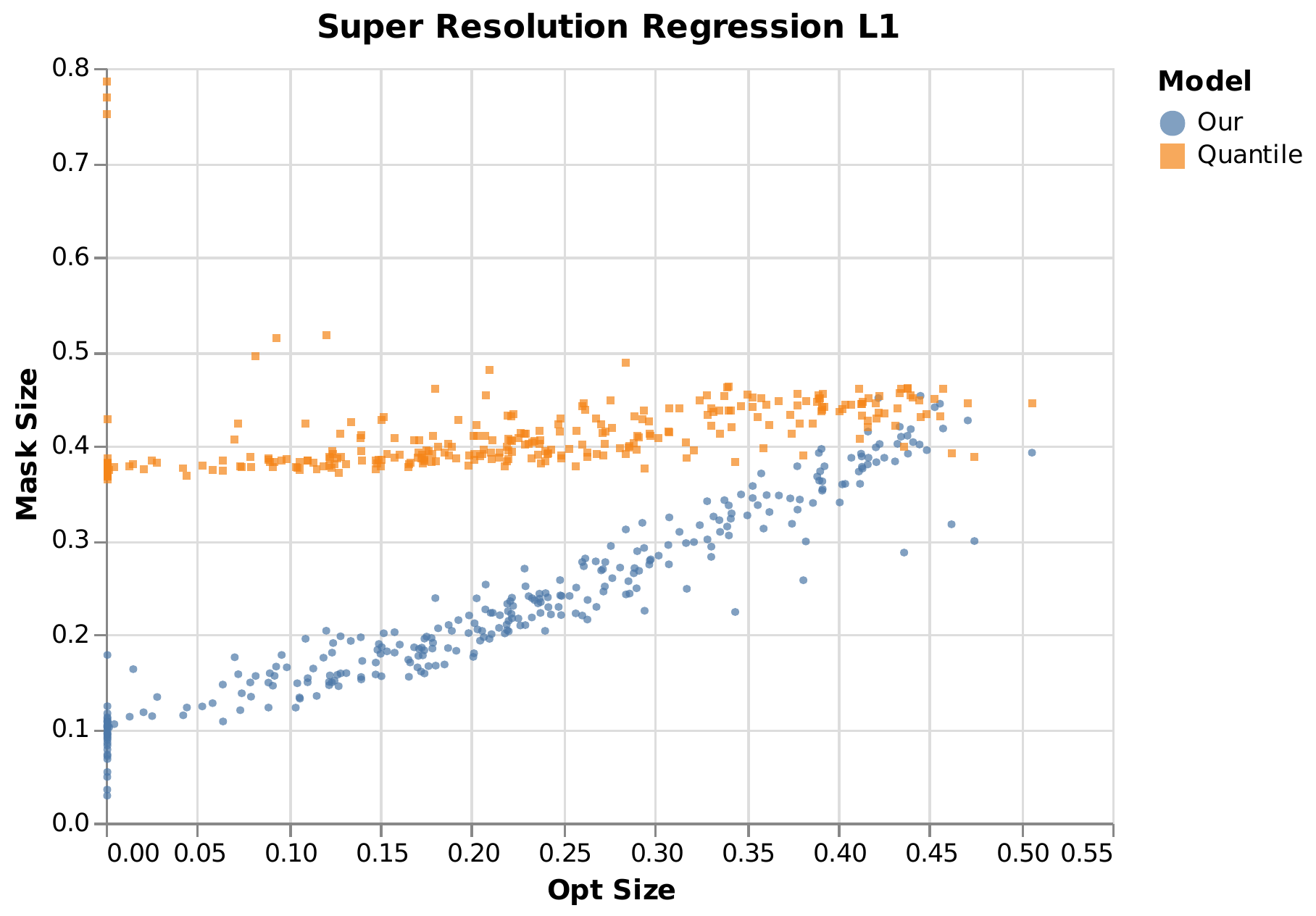}
        & \includegraphics[width=.5\textwidth]{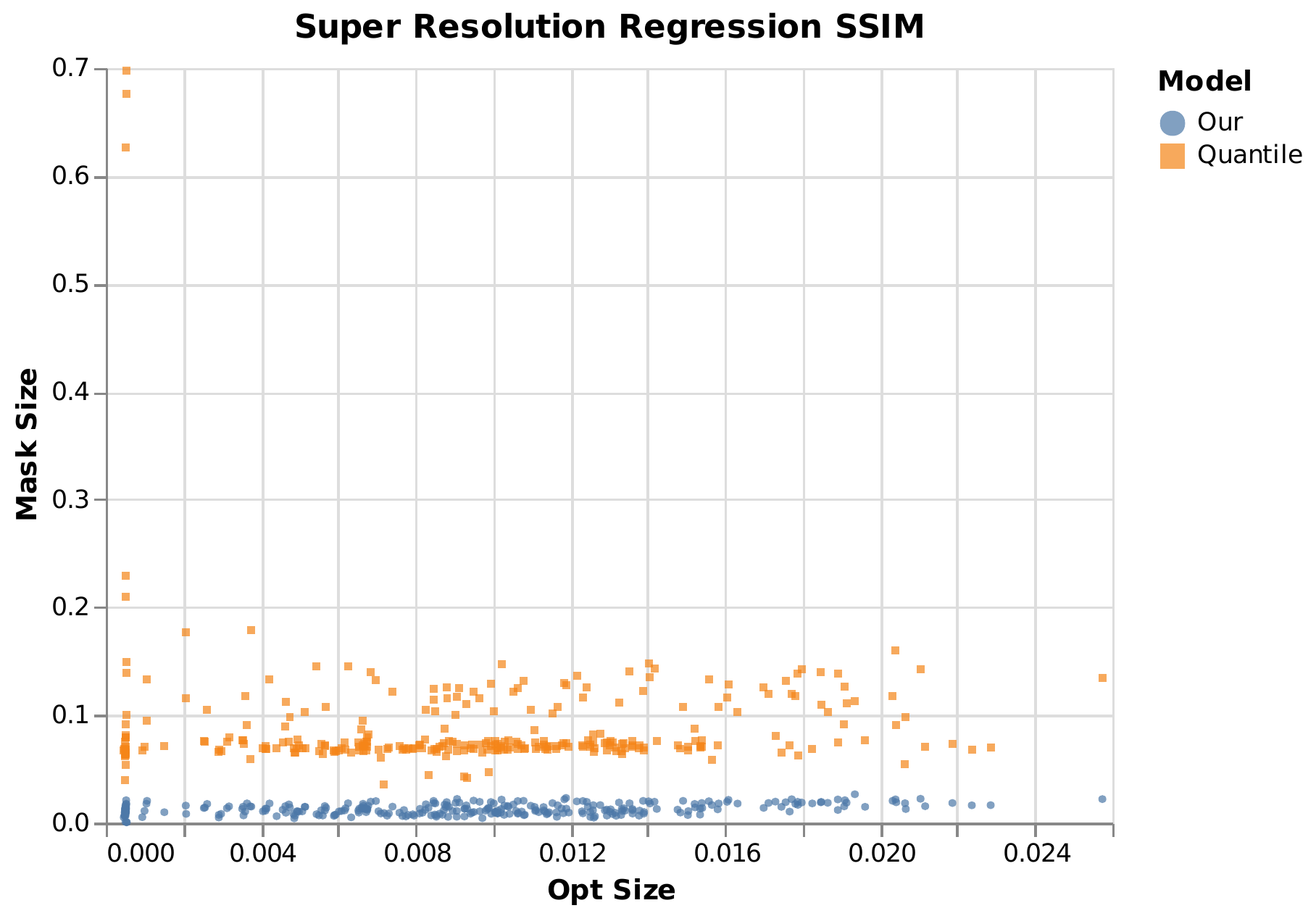}
    \end{tabular}

    \caption{Super Resolution - The correlation between the Ours and Quantile mask sizes versus Opt's mask size.}
    \label{fig:CorrSR}
\end{figure}

\begin{figure}[b]
    \centering
    \begin{tabular}{c c}
        \includegraphics[width=.5\textwidth]{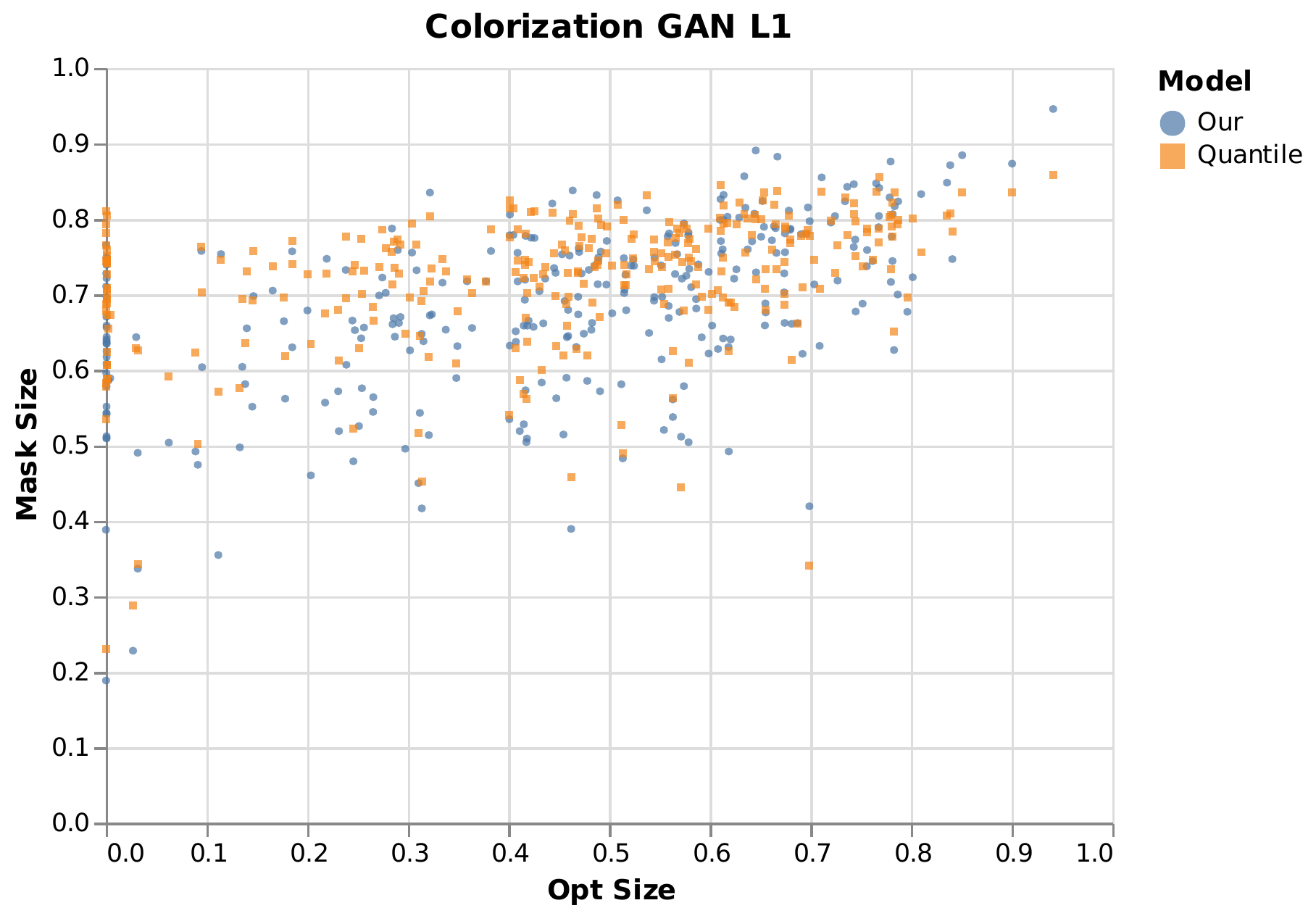}
        & \includegraphics[width=.5\textwidth]{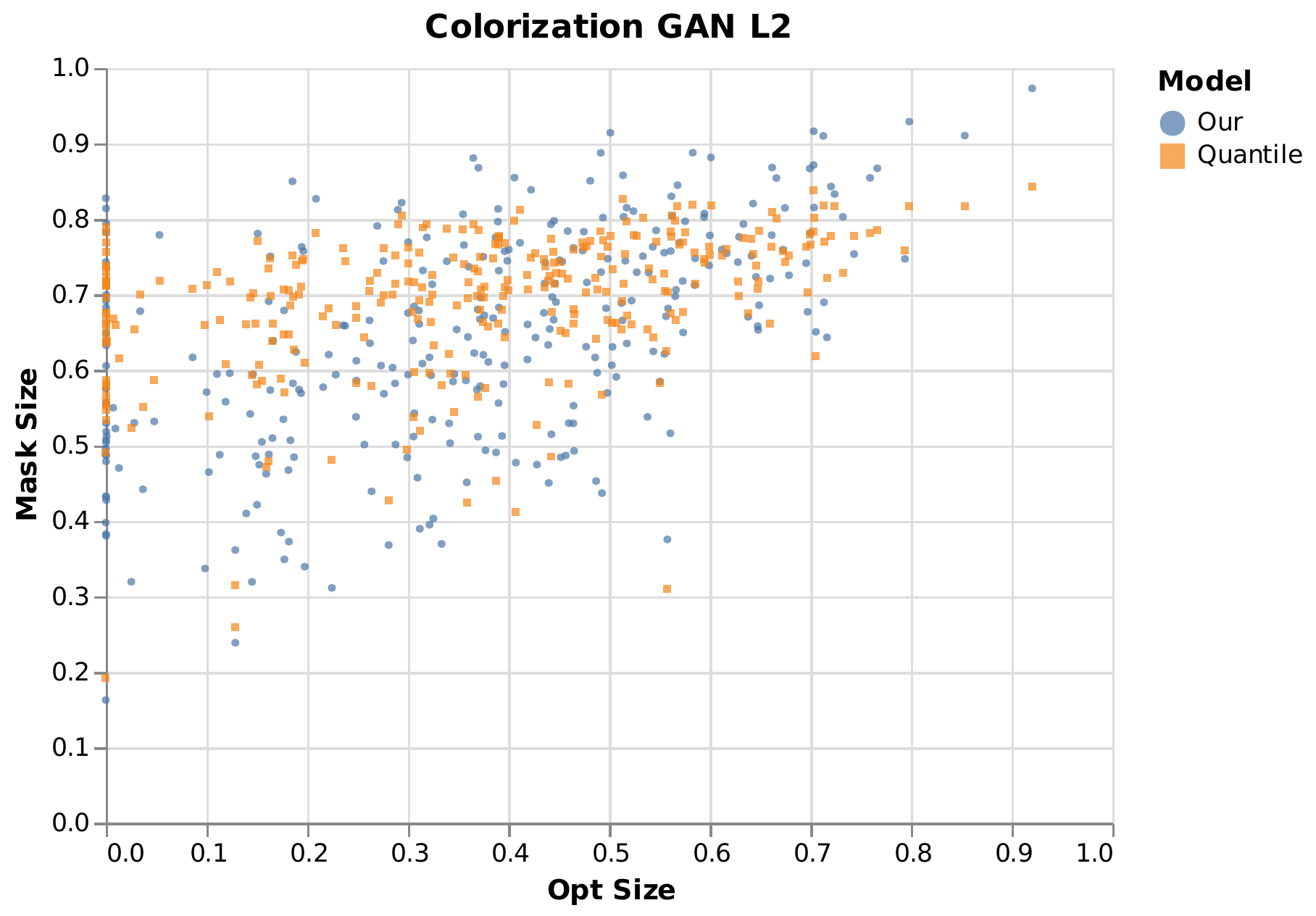} \\
        \includegraphics[width=.5\textwidth]{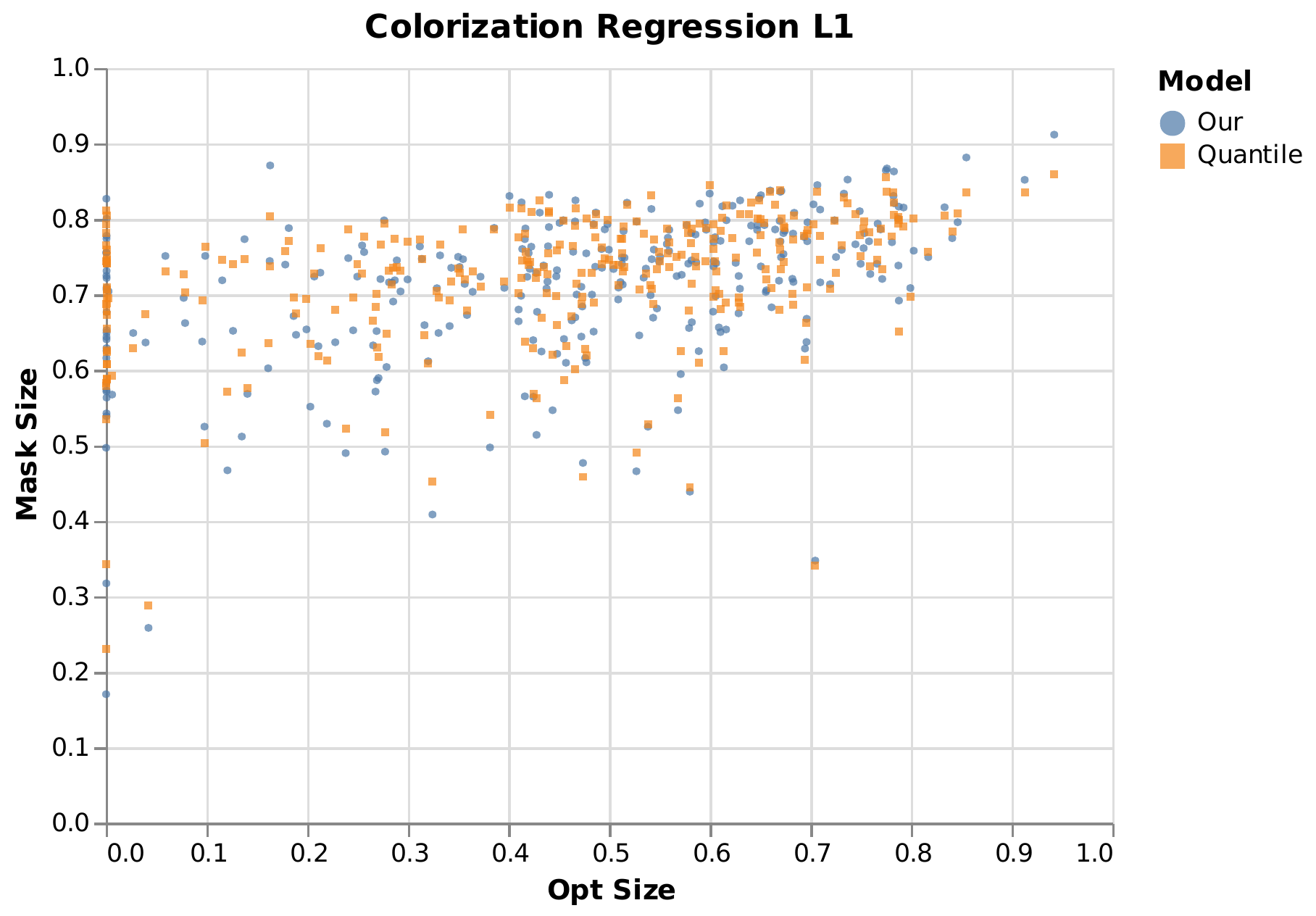}
        & \includegraphics[width=.5\textwidth]{imgs/size_corr_color_regressor_l2.pdf}
    \end{tabular}

    \caption{Colorization - The correlation between the Ours and Quantile mask sizes versus Opt's mask size.}
    \label{fig:CorrColorization}
\end{figure}

In this appendix we bring the results corresponding to all 12 settings  explored in our work, corresponding to three inverse problems, two regression techniques and two metrics per each, along the following breakdown: 
\begin{itemize} 
\item Image Completion: \{Regressor, GAN\} $\times$ \{L1, LPIPS\}; 
\item Super Resolution: \{Regressor, GAN\} $\times$ \{L1, SSIM\}; and
\item Colorization: \{Regressor, GAN\} $\times$ \{L1, L2\}. 
\end{itemize}

We start in Figures \ref{fig:ViolinsCompletion}-\ref{fig:ViolinsColorization} with the obtained distributions of masked distortion values for Opt, Ours and Quantile. The goal here is to show that all three methods meet the required distortion threshold with exceptions that do not surpass the destination probability $\beta=0.1$. 

Figures \ref{fig:HistogramCompletion}-\ref{fig:HistogramColorization} present histograms of the obtained mask sizes for the three methods. The goal is to get minimal areas in these masks, so as to keep most of the image content unmasked. 

Figures \ref{fig:CorrCompletion}-\ref{fig:CorrColorization} conclude these results by bringing graphs showing the inter-relation between the Opt mask size and the ones given by Ours and Quantile. 

Analysis and conclusions drawn from these graphs are brought in the experimental part of the paper.

\end{document}